\newcommand{\x}{{\bf x}}
\newcommand{\btheta}{{\bm \theta}}
\newcommand{\bomega}{{\bm \omega}}
\newcommand{\bmu}{{\bm \mu}}
\newcommand{\bbP}{\mathbb P}
\newcommand{\bbR}{\mathbb R}
\newcommand{\trn}{\text{trn}}
\newcommand{\bbE}{\mathbb E}
\newcommand{\cX}{\mathcal X}
\newcommand{\cH}{\mathcal H}
\newcommand{\cB}{\mathcal B}
\newcommand{\cD}{\mathcal D}
\newcommand{\cF}{\mathcal F}
\newcommand{\cO}{\mathcal O}
\newcommand{\cY}{\mathcal Y}
\newcommand{\cS}{\mathcal S}
\newcommand{\fP}{\mathfrak P}
\newcommand{\fK}{\mathfrak K}
\newcommand{\cP}{\mathcal P}
\newcommand{\daerm}{DA-ERM }
\newcommand{\dammd}{DA-MMD }
\newcommand{\dacoral}{DA-CORAL}
\newcommand{\ltin}{LT-ImageNet}
\newcommand{\ltinp}{LT-ImageNet}
\DeclareMathOperator*{\argmin}{arg\,min}
\newtheorem{theorem}{Theorem}
\newtheorem{lemma}{Lemma}
\newtheorem{remark}{Remark}
\begin{document}

\title{Adaptive Methods for Real-World Domain Generalization}

\author{Abhimanyu Dubey\thanks{Work done while visiting Facebook AI.}\\
MIT\\
{\tt\small dubeya@mit.edu}
\and
Vignesh Ramanathan\\
Facebook AI\\
{\tt\small vigneshr@fb.com}
\and
Alex Pentland\\
MIT\\
{\tt\small pentland@mit.edu}
\and
Dhruv Mahajan\\
Facebook AI\\
{\tt\small dhruvm@fb.com}
}
\maketitle

\begin{abstract}
Invariant approaches have been remarkably successful in tackling the problem of domain generalization, where the objective is to perform inference on data distributions different from those used in training. In our work, we investigate whether it is possible to leverage domain information from the unseen test samples themselves. We propose a domain-adaptive approach consisting of two steps: a) we first learn a discriminative domain embedding from unsupervised training examples, and b) use this domain embedding as supplementary information to build a domain-adaptive model, that takes both the input as well as its domain into account while making predictions. For unseen domains, our method simply uses few unlabelled test examples to construct the domain embedding. This enables adaptive classification on any unseen domain. Our approach achieves state-of-the-art performance on various domain generalization benchmarks. In addition, we introduce the first real-world, large-scale domain generalization benchmark, Geo-YFCC, containing $1.1$M samples over $40$ training, $7$ validation and $15$ test domains, orders of magnitude larger than prior work. We show that the existing approaches either do not scale to this dataset or underperform compared to the simple baseline of training a model on the union of data from all training domains. In contrast, our approach achieves a significant $1\%$ improvement.
\end{abstract}
\setlength{\tabcolsep}{2pt}
\section{Introduction}
\begin{figure}[t!]
\centering
\includegraphics[width=0.7\linewidth]{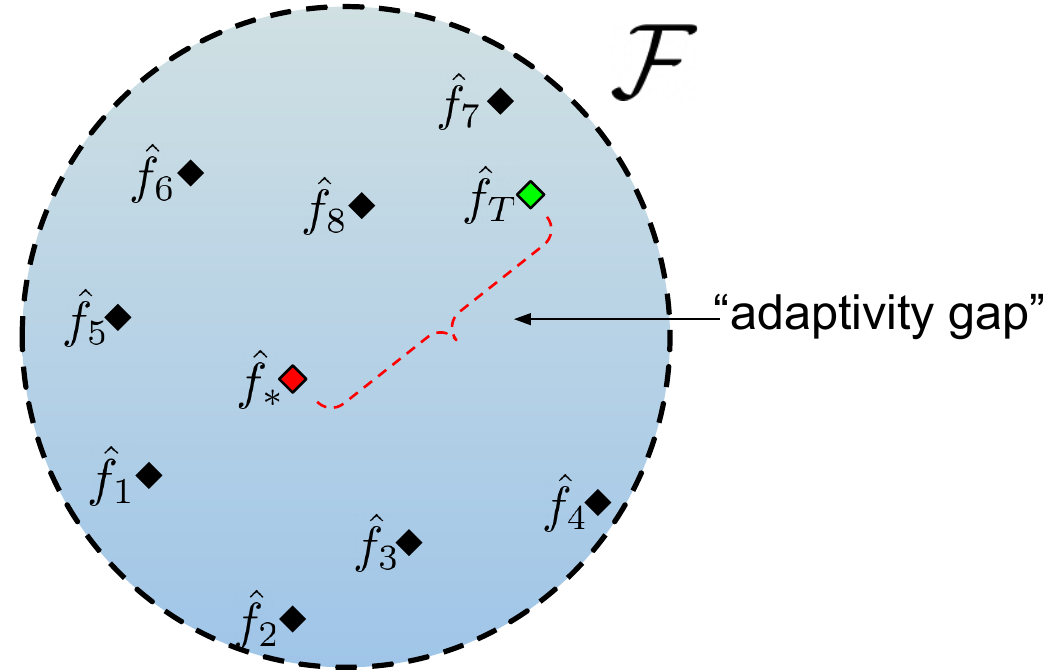}
\caption{A visual for \textit{adaptive} domain generalization. Here we denote by $\hat f_i$ the optimal classifier in a class $\cF$ for domain $i$ (out of $8$). For test domain $D_T$, the optimal \textit{universal} classifier $\hat f_*$ may be far from the optimal classifier for $D_T$.}\label{fig:intuition}
\end{figure}

Domain generalization refers to the problem of learning a classifier from a heterogeneous collection of distinct \textit{training} domains that can generalize to new unseen \textit{test} domains~\cite{blanchard2011generalizing}. Among the various formalizations proposed for the problem~\cite{gulrajani2020search, ben2010theory}, the most effective is \textit{domain-invariant} learning~\cite{baktashmotlagh2013unsupervised, ganin2016domain}. It learns feature representations invariant to the underlying domain, providing a \textit{universal} classifier that can generalize to new domains~\cite{li2018deep, li2018deep, muandet2013domain, li2018domain, ghifary2015domain}. 

It can be demonstrated that domain-\textit{invariant} classifiers minimize the \textit{average risk} across domains~\cite{muandet2013domain}. However, this may not guarantee good performance for any \textit{specific} test domain, particularly if the distribution of domains has high variance, as visualized heuristically in Figure~\ref{fig:intuition}. In this case, the optimal classifier for a target domain can lie far from the optimal \textit{universal} classifier. In our work, we propose an \textit{adaptive} classifier that can be adapted to any new domain using very few unlabelled samples without any further training. Unlike invariant approaches, this requires a few \textit{unsupervised} samples from any domain while \textit{testing}\footnote{This does not correspond to unsupervised domain adaptation, where unsupervised samples are assumed to be present during \textit{training}.}. However, this requirement is trivial, since this set is available by definition in all practical settings.

Our approach consists of two steps. We first embed each domain into a vector space using very few unlabelled samples from the domain. Next, we leverage these \textit{domain embeddings} as supplementary signals to learn a \textit{domain-adaptive} classifier. During testing, the classifier is supplied the corresponding embedding obtained from test samples. Our contributions can be summarized as follows.

1. We adapt low-shot \textit{prototypical learning}~\cite{snell2017prototypical} to construct \textit{domain embeddings} from unlabelled samples of each domain. We also provide an algorithm to use these embeddings as supplementary signals to train \textit{adaptive} classifiers.

2. We justify our design choices with novel theoretical results based on the framework of kernel mean embeddings~\cite{gretton2012kernel}. Furthermore, we leverage the theory of risk minimization under product kernel spaces~\cite{blanchard2011generalizing} to derive generalization bounds on the average risk for adaptive classifiers.

3. We introduce the first large-scale, real-world domain generalization benchmark, dubbed \textbf{Geo-YFCC}, which contains $40$ training, $7$ validation and $15$ test domains, and an overall $1.1M$ samples. Geo-YFCC is constructed from the popular YFCC100M~\cite{thomee2016yfcc100m} dataset partitioned by geographical tags, and exhibits several characteristics of \textit{domain shift}~\cite{li2017deeper}, \textit{label shift}~\cite{azizzadenesheli2019regularized}, and long-tailed class distributions~\cite{zhang2017range, xiang2020learning, cao2020domain} common in real-world classification. Because of its scale and diversity, this benchmark is substantially more challenging than the incumbent benchmarks.

4. On existing benchmarks and notably, two large-scale benchmarks, we demonstrate the strength of fine-tuning (ERM) with \textit{adaptive} classification over existing approaches. In addition to the effectiveness of adaptive classification, this suggests, along with the claims of Gulrajani and Lopez-Paz~\cite{gulrajani2020search} that rigorous model selection and large benchmarks are essential in understanding domain generalization, where naive ERM can also be a powerful baseline.


\section{Related Work}
Our paper is inspired by several areas of research, and we enumerate the connections with each area sequentially.\vspace*{0.05in}\\
\noindent \textbf{Domain Generalization.} The problem of domain generalization was first studied as a variant of multi-task learning in Blanchard~\etal\cite{blanchard2011generalizing}. Many \textit{domain-invariant} approaches~\cite{sun2016deep, li2018domain, li2018deep}, have been developed using the popular domain adaptation algorithm introduced in~\cite{ganin2016domain}, where the authors learned invariant representations via adversarial training. This has been followed by alternative formulations for invariant learning such as MMD minimization~\cite{li2018deep}, correlation alignment~\cite{sun2016deep}, and class-conditional adversarial learning~\cite{li2018domain}. Other approaches include meta learning~\cite{li2017learning}, invariant risk minimization~\cite{arjovsky2019invariant}, distributionally robust optimization~\cite{sagawa2019distributionally}, mixup~\cite{xu2020adversarial, yan2020improve, wang2020heterogeneous}, and causal matching~\cite{mahajan2020domain}. \textit{Adversarial training} with improvements~\cite{ganin2016domain, pei2018multi} has also been used to learn invariant representations.
Complementary to these approaches, we focus instead on learning \textit{adaptive} classifiers that are specialized to each target domain\footnote{Note that our algorithm is not opposed to invariant learning, we merely focus on learning \textit{distribution-adaptive} classifiers, that can be combined with invariant approaches with ease, see Section~\ref{sec:smallscale}.}. Our approach also does not use any unsupervised data from unseen domains during training, as is done in the problem of unsupervised domain adaptation~\cite{gong2012geodesic, baktashmotlagh2013unsupervised}.\vspace*{0.05in}\\
\noindent \textbf{Domain-Adaptive Learning.} The idea of using \textit{kernel mean embeddings} (KME) for adaptive domain generalization was proposed in the work of Blanchard~\etal\cite{blanchard2011generalizing}. Kernel mean embeddings have also been used for personalized learning in both multi-task~\cite{deshmukh2017multi} and multi-agent learning~\cite{dubey2020kernel} bandit problems. A rigorous treatment of \textit{domain-adaptive} generalization in the context of KME approaches is provided in Deshmukh~\etal\cite{deshmukh2019generalization}. Our approach complements this theoretical line of work, by providing an efficient algorithm for classification, that requires only a few unsupervised samples for competitive performance. We also extend the results of~\cite{blanchard2011generalizing} to a larger class of functions.\vspace*{0.05in}\\
\noindent \textbf{Large-Scale Learning.} We propose domain generalization benchmarks based on both the ImageNet LSVRC12~\cite{deng2009imagenet} and YFCC100M~\cite{thomee2016yfcc100m} datasets, which have been instrumental in accelerating research for image classification. We consider the challenges encountered specifically for large-scale computer vision, which is plagued by issues such as long-tailed data distributions~\cite{lin2020distribution}, and large training times~\cite{abu2016youtube}. It is important to note that in large-scale settings, it is difficult to perform extensive hyper-parameter tuning and optimal model selection owing to large training times.
\section{Approach}

We assume that each \textit{domain} $D \in \cD$ is a probability distribution over $\cX \times \cY$, i.e., the product of the input space $\cX$ and output space $\cY$, and there exists a \textit{mother} distribution $\fP$ which is a measure over the space of domains $\cD$. A \textit{training domain} $\widehat D(n)$ is obtained by first sampling a domain $D$ from $\fP$, and then sampling $n$ points ($X$ and $Y$) from $\cX \times \cY$ following $D$. A training set can then be constructed by taking the union of $N$ such domains $(\widehat{D}_i(n))_{i=1}^N$. Correspondingly, any test domain $\widehat D_T(n_T)$ is also constructed by first drawing a sample $D_T \sim \cD$, then drawing $n_T$ samples from $D_T$, and discarding the labels.



Consider a family of functions $\cF$. For each $D \in \cD$, the optimal classifier $f_D$ in $\cF$ (for some loss function $\ell$) can be given by $f_D = \argmin_{f \in \cF} \bbE_{(\x, y) \sim D}[\ell(f(\x), y)]$. We denote the optimal empirical risk minimization (ERM) classifier for each domain $\widehat D_i$ in the training set domain $D_i(n)$ as $\hat f_i = \argmin_{f \in \cF} \bbE_{(\x, y) \sim  D(n)}[\ell(f(\x), y)]$. The \textit{universal} expected risk minimizer (i.e., classifier minimizing overall risk over $D \sim \fP$) can then be given by $f_*$ such that,
\begin{equation}
f_* = \argmin_{f \in \cF} \bbE_{D \sim \fP}\bbE_{\x, y \sim D}[\ell(f(\x), y)].
\label{eq:erm}
\end{equation}
Typically, the optimal classifier for $D_T$ within $\cF$ may be far from the (non-adaptive) universal classifier $\hat f_*$, a distance which we call the ``adaptivity gap''. A visual heuristic representation of the above intuition is provided in Figure~\ref{fig:intuition}. Ideally, we would like the classifier to be chosen closer to $\hat f_T$ such that it obtains lower error. Note that the \textit{adaptivity gap} may be large even when the training set contains samples close to $\hat f_T$ (in the figure, this refers to functions $\hat f_7$ and $\hat f_8$), since the \textit{universal} classifier minimizes overall risk.

Our approach reduces this discrepancy by learning an \textit{adaptive} classifier, i.e., a function $F: \mathcal X \times \mathcal D \rightarrow \mathcal Y$ that takes \textit{both} the point $\x$ and the domain $D$ of $\x$ into consideration. Our goal is to make sure that for any test domain $D_T$, the resulting classifier $f_T = F(\cdot, D_T)$ from $F$ lies close to $\hat f_T$. We assume that a small set of unsupervised data is available from $D_T$ to characterize $F(\cdot, D_T)$ from $F$\footnote{This assumption is trivially satisfied by the test set itself. Note that we do not access $D_T$ during training and hence we do not perform any \textit{transductive learning}, unlike unsupervised domain adaptation.}.
We follow a two step approach:

\noindent{(A) \textbf{Computing domain embeddings.}} We first learn a function  $\bmu : D \rightarrow \bbR^{d_\cD}$, that maps any domain $D \in \cD$ (including empirical domains) to a finite vector (domain embedding). Ideally, we want this embedding to be learned from a few samples belonging to the domain. To achieve this, we leverage kernel mean embeddings~\cite{muandet2016kernel}, a formalization that can represent probability distributions as the average of some feature vectors $\Phi_\cD$ in a suitable Hilbert space. $\bmu$ is obtained as the feature $\Phi_\cD(\x)$ averaged over all $\x \in D$.


\noindent{(B) \textbf{ERM using augmented inputs.}} With a suitable domain embedding $\bmu$, we learn a neural network $F$ directly over the \textit{augmented input} $(\x, \bmu(\widehat D))$, where $\x$ is a point from the domain $\widehat D$, using regular ERM (i.e., minimizing cross-entropy loss from predictions, see Algorithm~\ref{alg:main_algo}). 

\noindent \textbf{Inference.} For an unseen domain $\widehat D_T$, we first compute the domain embedding  $\bmu(\widehat D_T)$ using unsupervised samples (or the test set when none are present) from $\widehat D_T$ and the learned function $\Phi$ from (A). We then provide the computed embedding as an additional input to the model $F$ from (B) to get the \textit{adaptive} model $F(\cdot, \bmu(\widehat D_T))$.

\subsection{Prototypical Domain Embeddings}
\label{sec:proto}
\noindent \textbf{Kernel Mean Embeddings.} An elegant approach to embed distributions into vector spaces is the nonparametric method of \textit{kernel mean embeddings} (KME)~\cite{muandet2016kernel}. The fundamental idea behind KMEs is to consider each probability distribution $D \in \cD$ as a member of a reproducing kernel Hilbert space (RKHS) $\cH(\cX)$ with some kernel $k$ and feature $\Phi_\cD : \bbR^d \rightarrow \bbR^{d_D}$. Then, the KME of any distribution $D \in \cD$  can be given as $\bmu(D) = \int_{\cX} \Phi_\cD(\x)dD(\x)$.

The strength of this approach is that, if a suitable $\Phi_\cD$ is chosen, then it is possible to learn a function over the space of distributions $\cD$ via the average feature embedding of samples drawn from $D$. Hence, we propose to learn both a domain embedding $\bmu$ using features $\Phi_\cD$ and the target function $F(\cdot, \bmu(\cdot))$, such that when any domain $D$ is presented, we can provide an \textit{adapted} function $F(\cdot, {\bmu}(D))$. We learn a neural network $\Phi_\cD :\cX \rightarrow \bbR^{d_D}$, parameterized by weights $\btheta$, and then compute the $n$-sample \textit{empirical} KME $\bmu$ of each training domain $\widehat D$ as:
\begin{equation}
    \bmu(\widehat D) = \frac{1}{n}\sum_{\x \in \widehat D} \Phi_\cD(\x ; \btheta).
\label{eq:emb}
\end{equation}
\textbf{Low-Shot Domain Prototypes}.
\label{sec:prototype}
To be useful in classification, we require $\bmu$ to satisfy two central criteria:\vspace*{0.05in}\\
\noindent (A) \textbf{Expressivity.} $\bmu$, and consequently, $\Phi_\cD$, must be expressive, i.e., for two domains $D$ and $D'$, $\lVert \bmu(D) - \bmu(D') \rVert_2$ must be large if they are very distinct (i.e., domain shift), and small if they are similar. For example, if $\Phi_\cD$ is constant, the KME will provide no additional information about $D$, and we can expect the \textit{expected risk} itself to be large in this case, equivalent to non-adaptive classification.\vspace*{0.05in}\\
\noindent (B) \textbf{Consistency.} $\Phi$ must have little dependence on both the choice, and the number of samples used to construct it. This is desirable as we only have access to the empirical distribution $\widehat{D}$ corresponding to any domain $D$. Consistency can be achieved, if we learn $\Phi_\cD$ such that for any sample $\x$ from domain $D$, $\Phi_\cD(\x)$ is strongly clustered around $\bmu(D)$.

We show that \textit{prototypical networks}~\cite{snell2017prototypical} originally proposed for the task of few-shot learning, can be adapted to construct domain embeddings which satisfy both criteria. We train the network based on the algorithm from Snell~\etal~\cite{snell2017prototypical}, but using domain identities as the labels instead of class labels. The network is trained such that embeddings of all points from a domain are tightly clustered and far away from embeddings corresponding to points of other domains. In Section~\ref{sec:theory}, we demonstrate consistency by deriving a concentration bound for the clustering induced by the network. Expressivity is guaranteed by the loss function used to train the network, which forces domain embeddings to be discriminative, as we show next.


Our prototypical network (embedding function $\Phi_\cD$) is trained with SGD~\cite{bottou2010large}. At each iteration, we first sample a subset of $N_t$ domains from the training domains. Next, we sample two sets of points from each domain. The first set is used to obtain the embedding for the domain, by running the points through the network $\Phi_\cD$ and averaging (Eq.~\ref{eq:emb}). This results in an approximate embedding $\bmu(\widehat D)$ for each domain $D$. Every point $\x$ in the second set is given a probability of belonging to a domain $D$:
\begin{align}
    p_\btheta(\x \in D) = \frac{\exp\left(-\lVert \bmu(\widehat D) - \Phi_\cD(\x)\rVert_2^2 \right)}{\sum_{i=1}^{N_t} \exp\left(-\lVert \bmu(\widehat{D}_i) - \Phi_\cD(\x)\rVert_2^2\right)}.
    \label{eqn:proto_loss}
\end{align}
Learning proceeds by minimizing the negative log-likelihood $J(\btheta) = -\log p_\btheta(\x \in D_\x)$ for the correct domain $D_\x$ from which $\x$ was sampled. This ensures that probability is high only if a point is closer to its own domain embedding and farther away from other domains, making the domain embedding \textit{expressive}.  Once training is complete, we obtain the final model parameters $\btheta^*$. For each domain $\widehat{D}_i(n)$, the \textit{domain prototype} is constructed as:
\begin{equation}
    \bmu(\widehat{D}_i(n)) := \frac{1}{n}\sum_{j=1}^{n} \Phi_\cD(\x; \btheta^*). \label{eqn:final_prototype}
\end{equation}
Prototype training and construction is completely unsupervised as long as we are provided the inputs partitioned by domains. We repeat \textit{only} Equation~\ref{eqn:final_prototype} for each test domain as well (no retraining). See Algorithm~\ref{algo:algo1} for details.

\begin{algorithm}[t!]
\footnotesize
\caption{Prototypical Training Pseudocode}
\label{alg:proto_algo}
\underline{\textsc{Prototype Training}}\\
\textbf{Input.} $N$ domains $(D_i)_{i=1}^N$ with $n$ samples each. $N_t$ : \#domains sampled each batch, $N_s$: \#support examples, $N_q$: \#query examples per batch.\\
\textbf{Output.} Embedding neural network $\Phi_\cD : \cX \rightarrow \bbR^{d_D}$.

\begin{algorithmic}
\STATE \textbf{Initialize.}  weights $\btheta$ of $\Phi_\cD$ randomly.
\FOR{Round $t = 1$ to $T$}
\STATE $D_t \leftarrow \textsc{Randomly Sample}(N, N_t)$.  {\color{gray}// sample $N_t$ domains}
\FOR{Domain $d$ in $D_t$}
\STATE $\cS_d \leftarrow \textsc{Randomly Sample}(D_d, N_s)$. {\color{gray}// sample support points}
\STATE $\cS_q \leftarrow \textsc{Randomly Sample}(D_d, N_q)$. {\color{gray}// sample query points}
\STATE $\widehat\bmu_d \leftarrow \frac{1}{N_s}\sum_{\x \in \cS_d} \Phi_\cD(\x; \btheta)$. {\color{gray}// compute prototype}
\ENDFOR
\STATE $J_\btheta(t) \leftarrow 0$.
\FOR{Domain $d$ in $D_t$}
\FOR{$\x \in \cS_q$}
\STATE Update Loss $J_\btheta(t)$ based on Equation~\ref{eqn:proto_loss}.
\ENDFOR
\ENDFOR
\STATE $\btheta \leftarrow \textsc{SGD Step}(J(t), \btheta)$. {\color{gray}// gradient descent step}
\ENDFOR
\end{algorithmic}
\hrulefill\\
\underline{\textsc{Prototype Computation $(\Phi_\cD, \cS)$}}\\
\textbf{Input.} Trained prototype network $\Phi_\cD$ with weights $\btheta^*$.\\ 
Set $\cS$ of points to create prototype.\\ \textbf{Output.} Prototype $\bmu(\cS)$.
\begin{algorithmic}
\RETURN $\bmu(\cS) \leftarrow \frac{1}{|\cS|}\sum_{\x \in \cS} \Phi_\cD(\x, \btheta^*)$.
\end{algorithmic}
\label{algo:algo1}
\end{algorithm}

\subsection{ERM on Augmented Inputs}
\label{sec:augtrain}
Once we have the domain prototype for each training domain from the previous step, we create the \textit{augmented} training set by appending the corresponding prototype to each input. We obtain the training set $\widetilde\cS_\trn = \cup_{i=1}^{N} \widetilde{D}_i$, where, 
\begin{equation}
\widetilde{D}_i = \left(\x_{ij}, \bmu(\widehat{D}_i), y_{ij}\right)_{j=1}^{n}.
\label{eqn:augmented_dataset}
\end{equation}

The second step is to then train the \textit{domain-adaptive} classifier $F$ over $\widetilde\cS_\trn$ using regular ERM training. $F$ consists of two neural networks (Algorithm~\ref{algo:algo2}). The first network $F_{\text{ft}}$ with weights $\bomega_{\text{ft}}$ takes $\x$ (image) as input and outputs a feature vector $\Phi_\cX(\x)$. We then concatenate the image feature $\Phi_\cX$ with the domain embedding $\bmu(D)$ and pass through the second network $F_{\text{mlp}}$ which predicts the class label $y$. We refer to this approach as \daerm. In Section~\ref{sec:smallscale}, we present a few variants that combine the complimentary nature of adaptive and invariant approaches.
 
When testing, we first compute the domain prototypes for a test domain $\widehat D_T$ (Equation~\ref{eqn:final_prototype}) using a small number of unlabeled examples\footnote{We can also use a small sample of the total test set to create $\bmu(\widehat D_T)$.}, and then produce predictions via $F(\cdot, \widehat D_T)$. Alternatively, it is possible to decouple the prototype construction and inference procedures in the test phase (since we can use unsupervised samples obtained \textit{a priori} to construct prototypes). Note that once the prototype $\bmu(\widehat D_T)$ is computed, we only need the adapted classifier $F (\cdot, \bmu(\widehat D_T)$ for inference, not the prototypical network.

\begin{algorithm}[t]
\footnotesize
\caption{Adaptive Training Pseudocode}
\label{alg:main_algo}
\underline{\textsc{Adaptive Training}}\\
\textbf{Input.} Prototype network $\Phi_\cD$ with weights $\btheta^*$, Domains $(D_i)_{i=1}^N$. \\
\# points $N_p$ to sample from each domain to create prototype.\\
\textbf{Output.} $F_{\text{ft}}$ with weights $\bomega_{\text{ft}}$, $F_{\text{mlp}}$ with weights $\bomega_{\text{mlp}}$.

\begin{algorithmic}
\STATE {\color{gray} // compute prototypes for training domains}
\FOR{Domain $d$ from $1$ to $N$}
\STATE $\cS_d \leftarrow \textsc{Randomly Sample}(D_d, N_p)$.
\STATE $\bmu(\widehat D_d) \leftarrow$ \textsc{Prototype Computation}$(\Phi_\cD, \cS_d)$.
\ENDFOR
\STATE Create augmented dataset $(\widetilde D_i)_{i=1}^n$ (Equation~\ref{eqn:augmented_dataset}).
\FOR{SGD round $t=1$ to $T$}
\STATE Sample batch $(\x, \bmu, y)$ from augmented dataset.
\STATE $\Phi_\cX(\x) \leftarrow F_{\text{ft}}(\x; \bomega_{\text{ft}})$. {\color{gray}// compute image features}
\STATE $\Phi(\x, \bmu) \leftarrow \textsc{Concat}(\Phi_\cX(\x), \bmu)$. {\color{gray}// concatenate features}
\STATE $\hat y \leftarrow F_{\text{mlp}}(\Phi(\x, \bmu); \bomega_{\text{mlp}})$. {\color{gray}// compute predictions}
\STATE $J_{\bomega}(t) \leftarrow \textsc{CrossEntropy}(\hat y, y)$. {\color{gray}// compute loss}
\STATE $\bomega_{\text{ft}}, \bomega_{\text{mlp}} \leftarrow \textsc{SGD Step}(J_\bomega(t), \bomega_{\text{ft}}, \bomega_{\text{mlp}})$. {\color{gray}// gradient descent}
\ENDFOR
\end{algorithmic}
\hrulefill\\
\underline{\textsc{Adaptive Inference}}\\
\textbf{Input.} Trained networks $\Phi_\cD, F_{\text{ft}}, F_{\text{mlp}}$.\\ 
A set $\cS$ of $N$ points for prototype from domain $D$.\\
\textbf{Output.} Adaptive classifier for domain $D$.

\begin{algorithmic}
\STATE $\bmu \leftarrow$ \textsc{Prototype Computation}$(\Phi_\cD, \cS)$.
\RETURN $F(\x) = F_\text{mlp}\left(\textsc{Concat}\left(F_{\text{ft}}(\x), \bmu\right)\right)$.
\end{algorithmic}
\label{algo:algo2}
\end{algorithm}
\subsection{Theoretical Guarantees}
\label{sec:theory}
Here we provide an abridged summary of our theoretical contributions, and defer the complete results and proofs to the appendix. We first demonstrate that expressivity and consistency are theoretically motivated using the framework of kernel mean embeddings~\cite{muandet2016kernel}. Our first result is a concentration bound on the $n$-point approximation error of $\bmu$.
\begin{theorem}($\bmu$ approximation, informal)
For any domain $D$ and feature $\Phi$  such that $\bbE_{\x \sim D}[\lVert\Phi(\x)\rVert_2^2] \leq \sigma^2$, let the corresponding empirical dataset be $\widehat D(n)$, let $\bmu(D)$ be the true mean embedding and $\bmu(\widehat D)$ be its approximation. Then with high probability, $\lVert \bmu(D) - \bmu(\widehat D) \rVert_\infty \lesssim \widetilde{\mathcal O}(\sigma/\sqrt{n})$\footnote{The $\widetilde\cO$ notation hides polylogarithmic factors and failure probability.}. 
\label{thm:prototype}
\end{theorem}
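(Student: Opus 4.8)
The plan is to treat $\bmu(\widehat D) - \bmu(D)$ as an empirical average of i.i.d.\ centered random vectors and to control its $\ell_\infty$ norm by a coordinate-wise concentration argument followed by a union bound. Writing the $n$ samples of $\widehat D(n)$ as $\x_1,\dots,\x_n \sim D$ i.i.d., the definition $\bmu(D) = \bbE_{\x \sim D}[\Phi(\x)]$ gives the centered decomposition
\begin{equation}
\bmu(\widehat D) - \bmu(D) = \frac{1}{n}\sum_{i=1}^{n}\big(\Phi(\x_i) - \bbE_{\x\sim D}[\Phi(\x)]\big),
\end{equation}
so that each summand has mean zero. Since $\lVert\bmu(\widehat D)-\bmu(D)\rVert_\infty = \max_{1\le j\le d_D}\lvert[\bmu(\widehat D)-\bmu(D)]_j\rvert$, it suffices to concentrate each of the $d_D$ scalar coordinates and then pay a $\log d_D$ price for the maximum.

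First I would fix a coordinate $j$ and set $W_{ij} = \Phi_j(\x_i) - \bbE[\Phi_j(\x)]$, so that the $j$-th coordinate of the error is $\frac1n\sum_i W_{ij}$, an average of i.i.d.\ centered scalars. The hypothesis $\bbE_{\x\sim D}[\lVert\Phi(\x)\rVert_2^2]\le\sigma^2$ controls the per-coordinate variance, since $\bbE[\Phi_j(\x)^2] \le \sum_{k}\bbE[\Phi_k(\x)^2] = \bbE[\lVert\Phi(\x)\rVert_2^2] \le \sigma^2$. Next I would apply Bernstein's inequality to $\frac1n\sum_i W_{ij}$ and then a union bound over the $d_D$ coordinates: choosing the per-coordinate failure level $\delta/d_D$ yields, with probability at least $1-\delta$,
\begin{equation}
\lVert\bmu(\widehat D)-\bmu(D)\rVert_\infty \;\lesssim\; \sigma\sqrt{\frac{\log(d_D/\delta)}{n}} \;+\; \frac{B\,\log(d_D/\delta)}{n},
\end{equation}
where $B$ bounds the per-coordinate fluctuation of $\Phi$. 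For $n$ large the sub-Gaussian term dominates, giving exactly the advertised $\widetilde{\cO}(\sigma/\sqrt n)$ rate with the $\log(d_D/\delta)$ absorbed into $\widetilde\cO$.

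The main obstacle is that a second-moment bound alone cannot produce the exponential (hence $\log(1/\delta)$) tail that $\widetilde\cO$ hides — pure Chebyshev would only give a $\sigma/\sqrt{n\delta}$ rate with polynomial dependence on $\delta$. To reach Bernstein's regime I expect the formal statement to additionally assume that $\Phi$ is bounded (e.g.\ $\lVert\Phi(\x)\rVert_\infty \le B$ almost surely, which is natural for a normalized network embedding), supplying both the boundedness $\lvert W_{ij}\rvert\le 2B$ needed by Bernstein and, trivially, the variance proxy. An equivalent route, if one prefers to avoid coordinate-wise tail bounds, is a bounded-differences (McDiarmid) argument on $g(\x_1,\dots,\x_n)=\lVert\bmu(\widehat D)-\bmu(D)\rVert_\infty$, whose value changes by at most $2B/n$ when a single sample is swapped; this concentrates $g$ about $\bbE[g]$, after which $\bbE[g]$ is bounded by a standard sub-Gaussian maximal inequality giving the same $\sigma\sqrt{\log(d_D)/n}$ scale. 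Either way, the only genuinely delicate point is securing the tail behaviour from the moment hypothesis; the remaining steps are the routine centering, variance bookkeeping, and union bound.
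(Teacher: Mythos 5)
Your proposal is correct and lands on the same core idea as the paper---Bernstein-type concentration of the i.i.d.\ average $\frac1n\sum_i(\Phi(\x_i)-\bbE[\Phi])$, using both a second-moment bound and an almost-sure bound on $\Phi$---but the technical route differs. The paper invokes a \emph{vector-valued} Bernstein inequality directly on the norm of the centered vector average, which yields $\lVert\bmu(D)-\bmu(\widehat D)\rVert \lesssim \sigma\sqrt{\log(1/\delta)/n}$ with no dependence on the embedding dimension $d_D$ (the $\ell_\infty$ claim then follows since $\lVert\cdot\rVert_\infty\le\lVert\cdot\rVert_2$). You instead run scalar Bernstein coordinate-by-coordinate and union bound over the $d_D$ coordinates, which is more elementary but pays an extra $\log d_D$ inside the $\widetilde\cO$; both give the advertised $\widetilde\cO(\sigma/\sqrt n)$ rate. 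Your instinct that the informal statement must secretly assume boundedness of $\Phi$ is exactly right: the formal version in the appendix adds $\lVert\Phi_\cD(\x)\rVert\le B_{k'}$, which is precisely what lets the paper's vector Bernstein lemma (whose valid range is $0<\epsilon<\sigma^2/\mu$ with $\mu$ the almost-sure norm bound) apply, just as it supplies the $\lvert W_{ij}\rvert\le 2B$ your scalar Bernstein needs. Your McDiarmid alternative would also work under the same boundedness assumption. The only cosmetic difference is that your bound carries an explicit $B\log(d_D/\delta)/n$ second-order term, whereas the paper's restricted-range vector Bernstein suppresses it; neither affects the stated rate.
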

This suggests that for low error, the variance $\sigma$ must be small for each domain $D \in \cD$, a property achieved by prototypes~\cite{snell2017prototypical}. Next, we provide a generalization bound that controls the gap between the training error $\widehat L_{n, N}(F)$ and test error $L(F)$ (RHS of Eqn.~\ref{eq:erm}) for $F$ that is \textit{adaptive}, i.e., a function of $\cX \times \cD$ and lies in the Hilbert space determined by some kernel $\kappa$. We build on the framework of~\cite{blanchard2011generalizing} and extend their results to a more general class of kernel functions.
\begin{theorem}(error bound, informal) Let $F$ be an adaptive classifier defined over $\cX \times \cD$ that lies in an RKHS $\cH_\kappa$ such that $\lVert f \rVert_{\cH_\kappa} \leq R$. Then, we have, with high probability that $|L(F) - \widehat L_{n, N}(F)| \lesssim \widetilde{\mathcal O}\left(R\cdot \sigma((\log N) /n)^{\frac{1}{2}} + RN^{-\frac{1}{2}}\right)$.
\label{thm:risk_bound}
\end{theorem}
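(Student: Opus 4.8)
The plan is to control the gap by a two-part error decomposition that isolates the two sources of randomness separately: the finite number $n$ of samples used to estimate each domain embedding, and the finite number $N$ of domains drawn from $\fP$. Write $g_F(D) = \bbE_{(\x,y)\sim D}[\ell(F(\x,\bmu(D)), y)]$ for the per-domain expected loss, so that $L(F) = \bbE_{D\sim\fP}[g_F(D)]$. I would introduce the intermediate empirical risk $\widetilde L_{n,N}(F) = \frac1N\sum_{i=1}^N \frac1n\sum_{j=1}^n \ell(F(\x_{ij}, \bmu(D_i)), y_{ij})$, which uses the empirical samples but the \emph{true} domain embeddings $\bmu(D_i)$ rather than their $n$-sample approximations $\bmu(\widehat D_i)$. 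By the triangle inequality,
\begin{equation}
|L(F) - \widehat L_{n,N}(F)| \le \underbrace{|L(F) - \widetilde L_{n,N}(F)|}_{\text{sampling}} + \underbrace{|\widetilde L_{n,N}(F) - \widehat L_{n,N}(F)|}_{\text{embedding}},
\end{equation}
and I would bound the two terms separately, matching them to the $RN^{-\frac12}$ and $R\sigma((\log N)/n)^{\frac12}$ contributions respectively.

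For the embedding term I would exploit the RKHS structure of $F$. Assuming $\kappa$ factorizes (or is otherwise smooth) in its distribution slot so that its canonical feature map is $L_\kappa$-Lipschitz in the embedding argument, the reproducing property gives $|F(\x,\bmu(D_i)) - F(\x, \bmu(\widehat D_i))| \le \lVert F\rVert_{\cH_\kappa}\, L_\kappa\, \lVert \bmu(D_i) - \bmu(\widehat D_i)\rVert \le R L_\kappa \lVert\bmu(D_i)-\bmu(\widehat D_i)\rVert$. Combined with the Lipschitzness of $\ell$ in its first argument, each per-sample discrepancy is controlled by $\lVert\bmu(D_i)-\bmu(\widehat D_i)\rVert$, so that $|\widetilde L_{n,N}(F) - \widehat L_{n,N}(F)| \lesssim R \cdot \frac1N\sum_i \lVert\bmu(D_i)-\bmu(\widehat D_i)\rVert$. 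I would then apply Theorem~\ref{thm:prototype} to each domain: each norm is $\widetilde{\cO}(\sigma/\sqrt{n})$ with high probability, and a union bound over the $N$ training domains (inflating the failure probability from $\delta$ to $\delta/N$) introduces the $\log N$ factor, yielding the claimed $R\sigma((\log N)/n)^{\frac12}$.

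For the sampling term, both the domains and the within-domain points are now genuine i.i.d.\ draws (conditioned only on the true embeddings), so I would use a standard symmetrization and Rademacher-complexity argument over the ball $\{F \in \cH_\kappa : \lVert F\rVert_{\cH_\kappa}\le R\}$. Writing the gap as the sum of a between-domain fluctuation $\frac1N\sum_i g_F(D_i) - \bbE_D[g_F(D)]$ and a within-domain fluctuation $\frac1N\sum_i(\hat g_i - g_F(D_i))$, the empirical Rademacher complexity of the RKHS ball over $M$ points scales as $R\sqrt{\sup\kappa/M}$. The between-domain term involves $M=N$ effective units and contributes $\lesssim RN^{-\frac12}$; the within-domain term involves $nN$ points and contributes the lower-order $\lesssim R(nN)^{-\frac12}$, absorbed into $RN^{-\frac12}$. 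A Talagrand contraction step removes the Lipschitz loss $\ell$, and a bounded-differences inequality converts the expected complexity bound into the high-probability statement; summing the two terms gives the theorem.

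I expect the main obstacle to be the embedding term, specifically transferring the concentration of $\bmu$ (Theorem~\ref{thm:prototype}, stated in $\lVert\cdot\rVert_\infty$) into a bound on the variation of the $F$-values for a \emph{general} kernel $\kappa$ on $\cX\times\cD$ — this is precisely the point at which the analysis must go beyond the specific distribution kernel of Blanchard~\etal. The crux is to identify the minimal smoothness condition on $\kappa$ in the distribution slot (a Lipschitz or Hölder feature map) under which the reproducing-property bound above holds, and to verify that the induced constant $L_\kappa$ is finite and does not secretly depend on $n$ or $N$. A secondary technical point is the two-level sampling in the Rademacher bound: the draws are not exchangeable across the whole dataset, so the symmetrization must be applied hierarchically (first over domains, then over the points within each domain), with care taken to confirm that the within-domain term is genuinely lower order rather than contributing an extra $n^{-\frac12}$ at the leading scale.
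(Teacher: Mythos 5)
Your proposal is correct and follows essentially the same route as the paper: the paper decomposes the gap through the intermediate risk evaluated at the \emph{true} embeddings $\bmu(D_i)$ (your $\widetilde L_{n,N}$), bounds the embedding term via the reproducing property, the Lipschitz/H\"older smoothness of $\kappa$ in its distribution slot, Theorem~\ref{thm:prototype}, and a union bound over the $N$ domains (producing the $\log N$), and handles the domain-sampling term by the Rademacher-complexity argument of Blanchard~\etal. The obstacle you flag --- pushing the $\bmu$-concentration through a general kernel on $\cX\times\cD$ rather than a product kernel --- is exactly the point the paper identifies as its extension of that prior analysis.
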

Theorem~\ref{thm:risk_bound} quantifies the error in terms of the feature variance $\sigma$, ``norm'' of function $R$ and the number of samples $n$. In contrast to the results presented in prior work~\cite{blanchard2011generalizing, muandet2013domain, hu2020domain}, we provide a tighter dependence on $n$ that incorporates variance $\sigma$ as well. See Appendix for more details.
\section{Large Scale Benchmarks}
\label{sec:benchmarks}
There has been significant interest recently in domain generalization for vision problems \cite{gulrajani2020search}. However, we still lack large-scale benchmarks for rigorous evaluation. The largest dataset currently in use consists of only $7$ domains and $5$ classes~\cite{peng2019moment}, much smaller compared to real-world settings. Moreover, the model selection method of choice for this benchmark is \textit{leave-one-domain-out} cross validation, which is infeasible in most large-scale applications with many domains. To address these issues, we introduce two large-scale domain generalization benchmarks.
\subsection{Geo-YFCC}
YFCC100M~\cite{thomee2016yfcc100m} is a large-scale dataset consisting of 100M images sourced from Flickr, along with tags submitted by users. Geotags (latitude and longitude) are also available for many of these images. We construct a dataset where each domain corresponds to a specific country.\vspace*{0.05in}\\
\noindent{\textbf{Construction.}} We use geotags to partition images based on their country of origin. For the label space $\cY$, we consider the 4K categories from ImageNet-5K~\cite{deng2009imagenet} not present in ILSVRC12~\cite{russakovsky2015imagenet}. These categories are selected in order to eliminate biased prior knowledge from pre-training on ILSVRC12, which is the de-facto choice for large-scale pre-training. For each of the 4K labels, we select the corresponding images from YFCC100M based on a simple keyword-filtering of image tags. This provides us $1,261$ categories with at least 1 image present. Furthermore, each category is present in at least $5$ countries. We group images by their country of origin and only retain countries that have at least $10$K images. For any domain with more than $20$K images, we randomly sub-sample to limit it to $20$K images. Therefore, each \textit{domain} (i.e., country) has anywhere between $10$K-$20$K images, giving us a total of 1,147,059 images from 1,261 categories across $62$ countries (domains), and each image is associated with a class label and country (domain). We refer to this resulting dataset as {\textbf{Geo-YFCC}}. To the best of our knowledge, the scale of this dataset in the number of images, labels and domains is orders of magnitude more than prior datasets.\vspace*{0.05in}\\
\noindent{\textbf{Train-Test Split.}} We randomly partition the data in to $45$ training, $7$ validation and $15$ test domains (by country). For each domain, we sample $3$K points to create a per-domain test set and use the remaining points for training and validation. Since any image may have multiple labels, we convert it to a single-label dataset by replicating each such image and associating each separate copy for each label\footnote{This does not create any overlap between the train and test data as this duplication is done after the train-test split. As each image can belong only to one country, this ensures that no images are shared across domains.}, expanding the total image set to 1,809,832. We plan on releasing the complete partitions and annotations upon publication.\vspace*{0.05in}\\
\noindent{\textbf{Analysis: }}Geo-YFCC exhibits several properties of real-world data, such as long-tailed label distributions~\cite{liu2019large}, considerable \textit{covariate shift}~\cite{sugiyama2007covariate} and \textit{label shift}~\cite{azizzadenesheli2019regularized} across domains. Figure~\ref{fig:visualizations}A shows the label distributions from $4$ sample domains, where the categories are ordered based on the frequency of labels from the ``USA'' domain. We see that each domain exhibits long tails and labels shifts (i.e., the marginal distribution of labels is different in each domain~\cite{lipton2018detecting}). For example, {{``Cambodia"}} has  {{``Temple"}} as the most frequent category while it is a low-frequency concept for {{``USA"}}. Figure~\ref{fig:visualizations}B shows sample images for some categories from different domains, highlighting substantial \textit{covariate} shift (i.e., shift in the marginal distribution of $\cX$).

\begin{figure*}[t!]
\centering
\includegraphics[width=0.9\linewidth]{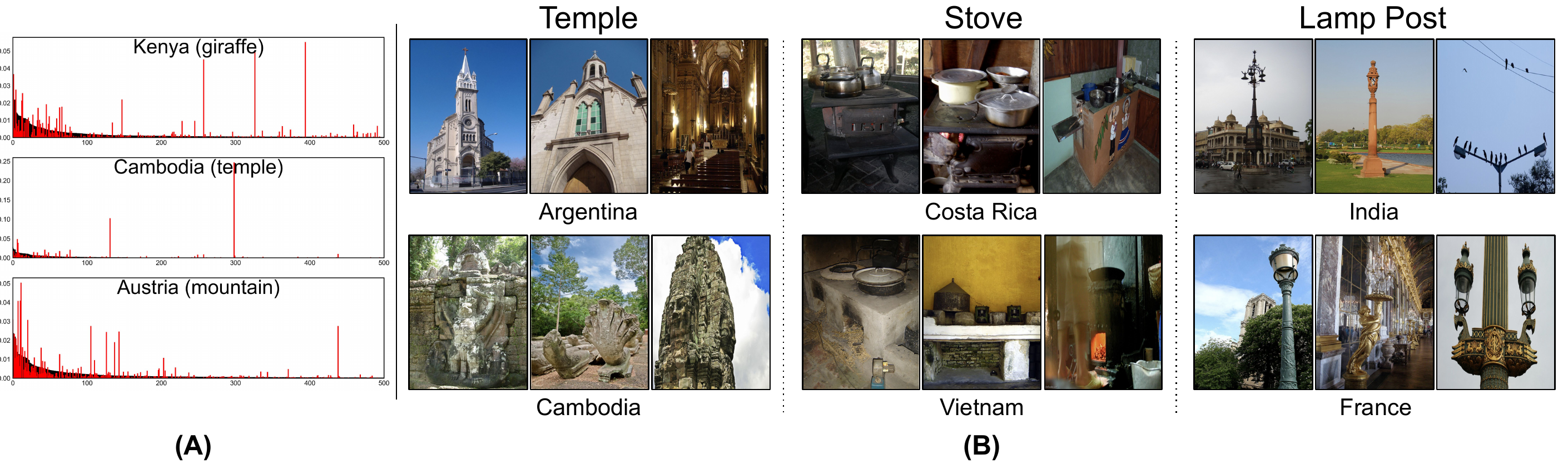}
\caption{(A, left) Relative frequencies of categories from 3 different domains overlaid on the those of the USA, with most frequent category in parentheses. (B, right) Samples from 3 classes from different domains, highlighting covariate shift.}
\label{fig:visualizations}
\end{figure*}

\subsection{Long-Tailed ImageNet (LT-ImageNet)}
We construct another dataset from the ImageNet-5K~\cite{deng2009imagenet} benchmark by artificially introducing label shifts and long-tailed label distributions. The controlled settings in this dataset allow us to perform various ablations and understand our algorithm (Section~\ref{sec:exp_ablations}). We subsample 500 categories from the 5K categories as the \textit{base set} (excluding ILSVRC classes as earlier), from which we partition samples from each class into distinct train, val and test subsets. Each training domain is constructed as follows. We first select $K$ head classes randomly from the base set. We then randomly sample $A$ points from train samples of each head class and $Af$, $f<1$ points from the remaining $500-K$ tail classes to construct the train split of that domain, in order to mimic long-tailed phenomena. We use $A=50$ and $300$ to construct val and test splits for the domain from the corresponding splits of each class. We repeat this process $N$ times to construct $N$ training domains. Our overall training set has a non-uniform representation of classes in each domain, along with variation in long-tailed effects. We refer to the resulting dataset as {\textbf{\ltin}}.

To construct the val and test set of domains, we sample $10$ domains with classes restricted to the set of all classes present in the training set (since we are not tackling zero-shot domain generalization), to ensure that the training data has at least 1 sample from each class (across all domains). 
We believe that these two large-scale datasets will help the community in properly understanding the effectiveness of existing and future domain generalization algorithms.
\section{Experiments}
We compare our algorithm with various recent algorithms on domain generalization problems both at the small-scale (i.e., fewer domains and classes per domain, and significant hyper-parameter tuning is possible) and at the real-world (large-scale) level. All benchmark algorithms are implemented from the \textsc{DomainBed}~\cite{gulrajani2020search} suite with the hyper-parameter value ranges suggested therein.

\noindent{\textbf{Domain Embeddings (Section~\ref{sec:proto}).} }We use a ResNet-50~\cite{he2016deep} neural network pre-trained on ILSVRC12~\cite{russakovsky2015imagenet} and introduce an  additional fully-connected layer of dimension  $d_D$ on top of the \texttt{pool5} layer. The output of this layer is used to construct the domain embedding (Equation~\ref{eqn:final_prototype}). Unless otherwise stated, $d_D=1024$, and, the number of domains $N_t$ used in $N_t$-way classification during each training step is set to $4$. See Appendix for optimal hyper-parameters. To construct the domain embedding (Equation~\ref{eqn:final_prototype}), we use at most $200$ random points from the test set itself for small scale benchmarks in Section~\ref{sec:smallscale}. For large-scale benchmarks, we leverage a subset of the samples from training domains and use a separate set of held-out points for test domains. For LT-ImageNet and Geo-YFCC, $300$ and 5K points are used respectively.\vspace*{0.05in}\\
\noindent{\textbf{DA-ERM (Section~\ref{sec:augtrain}).} }
The first neural network, $F_{\text{ft}}$, is again a ResNet-50~\cite{he2016deep} network pre-trained on ILSVRC12~\cite{russakovsky2015imagenet} with an additional fully-connected layer of dimension $1024$ added to the \texttt{pool5} layer. This layer is then concatenated with the
$d_D$ dimensional domain embedding and passed through $F_{\text{mlp}}$. Unless otherwise stated, the MLP has two layers  with hidden dimension $d_{\text{mlp}}=1024$, and output dimension set to number of classes. We use the standard cross-entropy loss for training. See the Appendix for optimal training hyper-parameters for each dataset.

\begin{table*}[th!]
\begin{minipage}{.6\textwidth}
\centering
\footnotesize
\rowcolors{3}{gray!6}{white}
\begin{tabular}{ccccccc}
\hline
\hline
\textbf{Algorithm} & VLCS~\cite{fang2013unbiased}        & PACS~\cite{li2017deeper}                 & OffHome~\cite{venkateswara2017Deep}          & DNet~\cite{peng2019moment}           &  TerraInc~\cite{beery2018recognition}       & Average               \\
\hline
ERM~\cite{gulrajani2020search}      &76.7  $\pm$   0.9         &83.2  $\pm$  0.7        &67.2 $\pm$  0.5         &41.1 $\pm$  0.8         &
46.2  $\pm$  0.3         &62.9  $\pm$  0.6         \\
IRM~\cite{arjovsky2019invariant}  &76.9  $\pm$  0.5         &82.8  $\pm$  0.5        &67.0 $\pm$  0.4         &35.7 $\pm$  1.9         & 
43.8  $\pm$  1.0         &61.2  $\pm$  0.8         \\
DRO~\cite{sagawa2019distributionally}     &77.3  $\pm$  0.2         &83.3 $\pm$  0.3         &66.8 $\pm$  0.1         &33.0 $\pm$  0.5         &
42.0  $\pm$  0.6         &60.4  $\pm$  0.4         \\
Mixup~\cite{wang2020heterogeneous}    &78.2  $\pm$  0.6         &83.9 $\pm$  0.8         &68.3 $\pm$  0.4         &40.2 $\pm$  0.4         &
46.2  $\pm$  0.5         &63.3  $\pm$  0.7         \\
MLDG~\cite{li2017learning}     &76.8  $\pm$  0.4         &82.6 $\pm$  0.6         &67.7 $\pm$  0.3         &42.2 $\pm$  0.6         & 
46.0  $\pm$  0.4         &63.0  $\pm$  0.4         \\
CORAL~\cite{sun2016deep}    &77.3  $\pm$  0.3         &83.3 $\pm$  0.5         &68.6 $\pm$  0.1         &42.1 $\pm$  0.7         &
47.7  $\pm$  0.3         &63.8  $\pm$  0.3         \\
MMD~\cite{li2018domain}      &76.1  $\pm$  0.7         &83.1 $\pm$  0.7         &67.3 $\pm$  0.2         &38.7 $\pm$  0.5         &
45.8  $\pm$  0.6         &62.2  $\pm$  0.6         \\
C-DANN~\cite{li2018deep}   &73.8  $\pm$  1.1         &81.4 $\pm$  1.3         &64.2 $\pm$  0.5         &39.5 $\pm$  0.2         &
40.9  $\pm$  0.7         &60.1  $\pm$  0.9         \\ 
\hline
DA-ERM   &78.0  $\pm$  0.2         &84.1 $\pm$  0.5         &67.9 $\pm$  0.4         &43.6 $\pm$  0.3         &
47.3  $\pm$  0.5         &64.1  $\pm$  0.8         \\
DA-MMD   &\textbf{78.6  $\pm$  0.5}&84.4 $\pm$  0.3         &68.2 $\pm$  0.1         &43.3 $\pm$  0.5         &
47.9  $\pm$  0.4         &64.5  $\pm$  0.5         \\
DA-CORAL &78.5  $\pm$  0.4         &\textbf{84.5 $\pm$  0.7}&\textbf{68.9 $\pm$  0.4}&\textbf{43.9 $\pm$  0.3}&
\textbf{48.1  $\pm$  0.3}&\textbf{64.7  $\pm$  0.6}\\ 
\hline
\hline
\end{tabular}
\caption{Small-Scale Benchmark Comparisons on \textsc{DomainBed}.}
\label{tab:exp_small_scale}
\end{minipage}
\begin{minipage}{.3\textwidth}
\rowcolors{4}{gray!6}{white}
\footnotesize
\centering
\captionsetup{justification=centering}
\begin{tabular}{c|cc|cc}
\hline
\hline
\multirow{3}{*}{\textbf{Algorithm}} & \multicolumn{2}{c|}{LT-ImageNet} & \multicolumn{2}{c}{Geo-YFCC} \\
         & \multicolumn{1}{c}{Train}              &  \multicolumn{1}{c|}{Test}              & \multicolumn{1}{c}{Train}                & \multicolumn{1}{c}{Test}                   \\
         & Top-1 / 5 & Top-1 / 5 & Top-1 / 5 & Top-1 / 5\\
\hline
ERM~\cite{gulrajani2020search}      & 70.9 / 90.3 & 53.7 / 77.5 & \textbf{28.5} / \textbf{56.3} & 22.4 / 48.2      \\
CORAL~\cite{sun2016deep}    & 71.1 / 89.3 & 53.5 /  76.2 & 25.5 / 51.2 & 21.8 / 46.4          \\
MMD~\cite{li2018domain}      & 70.9 / 88.0 & 52.8 / 76.4 & 25.4 / 50.9 & 21.8 / 46.2        \\
\hline
DA-ERM   & \textbf{73.4} / \textbf{91.8} & \textbf{56.1} / \textbf{80.5} & 28.3 / 55.8 & \textbf{23.4} / \textbf{49.1} \\
\hline
\hline
\end{tabular}
\caption{Large-Scale Comparisons.}
\label{tab:exp_large_scale}
\end{minipage}
\end{table*}

\subsection{Small-Scale Benchmarks} 
\label{sec:smallscale}
We conduct small-scale benchmark comparisons carefully following the \textsc{DomainBed} suite~\cite{gulrajani2020search}, which contains 7 datasets in total. This paper focuses specifically on real-world domain generalization, and hence we select the 5 larger datasets (VLCS~\cite{fang2013unbiased}, PACS~\cite{li2017deeper}, Office-Home~\cite{venkateswara2017Deep}, Domain-Net~\cite{peng2019moment} and Terra Incognita~\cite{beery2018recognition}) for our experiments, and discard the MNIST-based~\cite{lecun1998mnist} datasets.

In this setting we select hyper-parameters by \textit{leave one domain out} cross validation, the de facto technique for model selection for domain generalization, wherein we run $N$ trials of each hyper-parameter setting, setting one domain for testing and the rest for training, and selecting the set of hyper-parameters that maximize validation accuracy over the training domains, averaged over all $N$ trials. Performance under alternative techniques for model selection as well as the training details and hyper-parameter ranges for different algorithms can be found in the Appendix.\vspace*{0.05in}\\
\noindent\textbf{Domain Mixup.} Since the small-scale datasets have typically 4 and at most 7 domains in total (1 of which is used as testing), the resulting training set (of domains) is not representative enough to learn a sufficiently discriminatory domain prototype. To alleviate this issue, we introduce \textit{mixup}~\cite{zhang2017mixup} in the prototype training step, where, for each batch of points from training domains, we construct \textit{synthetic} domains by averaging points belonging to different domains and providing that as additional samples (belonging to a new domain). For example, during training, if we have a batch of points from each of the $N_t$ training domains, we will create additional $N_t$ synthetic batches where each synthetic batch is created by randomly averaging two batches. The averaging ratio for each synthetic domain is randomly chosen uniformly between $(0.2, 0.8)$.\vspace*{0.05in}\\
\noindent\textbf{Results.} Results are summarized in Table~\ref{tab:exp_small_scale}, where each experiment is averaged over 8 independent trials (for our algorithm, this includes retraining of the domain prototypical network). \daerm  provides an average improvement of \textbf{1.2\%} in accuracy across all 5 datasets over the ERM baseline that simply combines data from all the training domains to train a model. The improvement is more substantial in Domain-Net~\cite{peng2019moment}, that has a larger number of domains. This can be attributed to the increased diversity in the training data, which allows us to construct embeddings that cover the space of domains better. In comparison to other domain-invariant approaches, \daerm either performs better or competitively with the best approach on each individual dataset.

To demonstrate the complimentary value of \textit{adaptive} learning to domain-invariant methods, we modify the second step of our algorithm and consider two variants of the \daerm approach: \dammd and \dacoral, by replacing the cross-entropy in \daerm with the loss terms in domain-invariant approaches MMD~\cite{li2018domain} and CORAL~\cite{sun2016deep} respectively\footnote{Auxiliary loss terms minimizing the differences between the domains is applied to the $d_{\text{mlp}}$ dimensional hidden layer in the MLP $F_{\text{mlp}}$.} (see the appendix for details). Compared to CORAL, {\dacoral} provides a significant improvement of $0.9\%$, and \dammd achieves an impressive improvement of \textbf{2.3\%} ($62.2\%$ vs. $64.5\%$) over MMD.

\subsection{Large-Scale Benchmarks}
\label{sec:large_scale}
We now present results on two large-scale benchmarks proposed in Section~\ref{sec:benchmarks}. We compare our approach with ERM, CORAL~\cite{sun2016deep} (the best performing baseline in Section~\ref{sec:smallscale}), and, MMD~\cite{li2018domain}. We were unable to replicate several algorithms at scale, such as DANN~\cite{ganin2016domain}, Conditional-DANN~\cite{long2018conditional} and MMLD~\cite{pei2018multi}: these algorithms rely on adversarial training that requires careful hyper-parameter optimization, which we were unable to accomplish based on hyper-parameters suggested in \textsc{DomainBed}~\cite{gulrajani2020search}.
\vspace*{0.05in}\\
\noindent{\textbf{\ltin.} }We set $N$=50, $K$=100, $A$=350 and $f$=0.1 to construct the LT-ImageNet (50, 100, 350, 0.1) dataset and report top-1 and top-5 accuracies on the test splits of train and test domains in Table~\ref{tab:exp_large_scale} (left). We observe that \daerm improves top-1 performance by $2.3\%$ ($71.1\%$ vs. $73.4\%$) on train domains and $2.4\%$ ($53.7\%$ vs $56.1\%$) on test domains. 
We show the effect of varying the dataset settings ($N$, $K$, $A$ and $f$) in Section~\ref{sec:exp_ablations}.\vspace*{0.05in}\\
\noindent{\textbf{Geo-YFCC.} }
Table~\ref{tab:exp_large_scale} (right) shows comparisons on the test splits for train and test domains of the Geo-YFCC dataset. \daerm provides an absolute improvement of 1\% in comparison to benchmark techniques on test domains. It must be noted that this problem is substantially more challenging, as we have 1,261 classes with significant domain differences between the countries (Section~\ref{sec:benchmarks}). We also observe that all other domain-invariant baselines were unable to obtain improvements over the simple ERM baseline. Figure~\ref{fig:ablations}A shows t-SNE~\cite{maaten2008visualizing} visualizations of prototypical embeddings for each domain, grouped by continent. Countries belonging to the same continent are mapped closer together, expressing the semantics captured by $\bmu$. Additionally, we note that our algorithm simultaneously improves performance on test domains while maintaining performance on training domains, unlike other techniques.

\begin{figure*}[htb]
\centering
\includegraphics[width=0.85\linewidth]{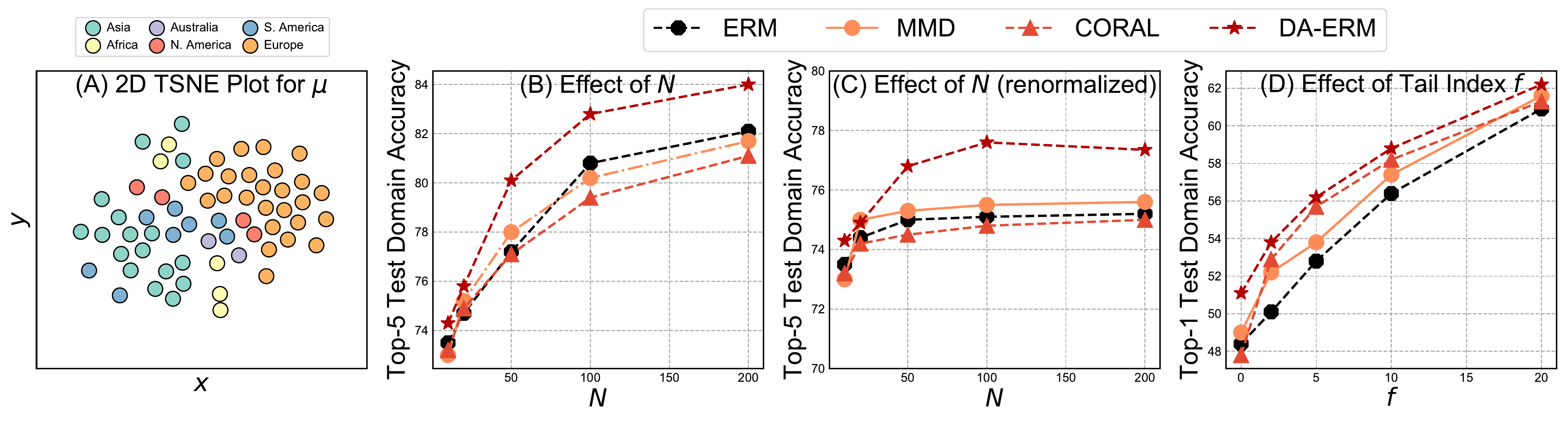}
\caption{Ablation experiments: Fig. A (left) is a t-SNE visualization of domain embeddings. Figs. B (center left) and C (center right) show the effect of changing the number of domains, where the total number of training points increases linearly with the number of domains in (B) and is constant in (C). Fig. D (right) demonstrates the effect of the tail index $f$.}
\label{fig:ablations}
\end{figure*}
\subsection{Ablations}
\label{sec:exp_ablations}
\subsubsection{Dataset Ablations}
We consider the LT-ImageNet dataset with default settings mentioned in Section~\ref{sec:large_scale}, unless otherwise stated.
\vspace*{0.05in}\\
\noindent \textbf{Effect of diversity in training data.} 
We first vary the number of available training domains ($N$), while sampling the same number of points from each domain. From Figure~\ref{fig:ablations}B, we observe that compared to other algorithms, the improvements in domain-aware training are larger, which we attribute to diversity in prototype training. Next, we repeat the same experiment, however, we subsample points from each domain such that the overall number of training points are the same, i.e., $NA$, remains constant. We see in Figure~\ref{fig:ablations}C that \daerm performs much better than other algorithms with increasing number of domains. Note that once $N > 100$, a slight dip is observed in our algorithm. This can be understood intuitively from Theorem~\ref{thm:risk_bound}, as it suggests such an inflection point when the total number of points ($nN$) is fixed due to the interaction of the two terms of the RHS. As the number of domains $N$ increases, the first term decreases, while the second term increases.
\vspace*{0.05in}\\
\noindent \textbf{Effect of Tail Index.} In Figure~\ref{fig:ablations}D, we demonstrate the effect of varying $f$: the parameter $f$ controls the extent of imbalance present in each domain, by adjusting the fraction of points available in the \textit{tail classes} compared to the head classes. We see that when the dataset is severely imbalanced (i.e., $f < 5\%$), domain-adaptive training considerably outperforms existing techniques, with a top-1 performance gap of about \textbf{3.5\%} on average comapred to naive ERM. As $f$ increases and the imbalance becomes smaller, our method performs closer to ERM itself, with an average improvement of only \textbf{1.8\%}. Since most real-world datasets are extremely long-tailed, we expect this to be a valuable feature of domain-adaptive learning in practice.
\subsubsection{Algorithm Ablations}
We perform several ablation studies on the domain embedding algorithm itself to verify the two properties of \textit{consistency} and \textit{expressivity} defined in Section~\ref{sec:proto}.
\vspace*{0.05in}\\
\noindent \textbf{Consistency.} We varied the number of points used to construct the prototype on Geo-YFCC and observed that the performance declines only for fewer than $50$ points. For other values up to $2$K, performance remains virtually identical (see Appendix).  
This is desirable as in many settings we do not have access to many samples from new domains.\vspace*{0.05in}\\
\noindent \textbf{Expressivity.} We measure the ability of the prototype to efficiently embed different domains distinctly within the embedding space. We evaluate the following two alternate approaches to construct domain embedding: (a) {\textit{Mean}}: we select the ERM baseline model and simply compute the average \texttt{pool5} features for each domain; (b) {\textit{Softmax}}: we train the domain embedding network via cross-entropy with domain IDs as labels instead of the prototypical loss. The results are summarized in Table~\ref{tab:exp_abl_proto_training_type}. For reference, we also report results of naive ERM ({\textit{None}}) and supplying incorrect embeddings ( i.e., the embedding of some other random domain) during evaluation of test domains ({\textit{Random}}). We observe that using random features degrades performance significantly, indicating that domain embeddings indeed play an important role in the final model. Mean features, while useful (i.e., perform better than naive ERM), are not as expressive as prototypical learning. Softmax features show mixed results by performing worse than Prototype on {\ltinp } and slightly better on Geo-YFCC. We additionally study the effect of number of domains sampled per round in prototypical training, and observe no significant effect for values in the range $[3, 8]$.
\begin{table}
    \centering
    \footnotesize
    \begin{tabular}{ccccccc}
    \hline
    \hline\multirow{2}{*}{Dataset} & \multicolumn{6}{c}{Top-1 Accuracy on Test Domains} \\
                  & None & Random   & Mean    & Softmax   & Prototype\\
    \hline
    LT - ImageNet & 76.9 & 75.3     & 77.9 & 78.8     & 80.5   &       \\
    Geo - YFCC    & 22.4    & 21.3 & 23.0       & 23.5      & 23.4  \\
    \hline
    \hline
    \end{tabular}
    \caption{Ablation of various domain embedding algorithms.}
    \label{tab:exp_abl_proto_training_type}
\end{table}
\section{Conclusion}
The problem of domain generalization lies at the heart of many machine learning applications. Research on the problem has largely been restricted to small-scale datasets that are constructed from artificially diverse sources that are not representative of real-world settings. For example, it is impractical to train a model on sketches if the target domain contains photographs: it is much easier in practice to curate a dataset of photographs instead. With this paper \textit{work}, our goal is to provide an algorithm and evaluation procedure that is closer to real-world applications, and involves experimentation at scale. We presented an \textit{adaptive} domain generalization framework, and conducted the first set of experiments at scale for this problem. Our results suggest that we need significant effort to scale to real-world problem settings, and with our contributions we expect to initiate research in this new direction.
\\
\noindent\textbf{Acknowledgements}. We would like to sincerely thank Deepti Ghadiyaram, Filip Radenovic and Cheng-Yang Fu for their helpful feedback on our paper.

\setlength{\tabcolsep}{2pt}
\appendix
\onecolumn
\section{Theory}
\subsection{Motivation}
The traditional objective of the domain generalization problem is to learn a function $f^* : \cX \rightarrow \cY$ that minimizes the empirical risk over $\fP$, i.e., for some class of functions $\cF$ and loss function $\ell : \bbR \times \cY \rightarrow \bbR_+$,
\begin{equation}
    f^* = \argmin_{f \in \cF} \bbE_{D \sim \fP}\left[\bbE_{(\x, y )\sim D}\left[\ell(f(\x), y)\right]\right].
\end{equation}
We denote the RHS in the above equation for any $f \in \cF$ as the \textit{expected risk} $L(f)$. Correspondingly, the optimal ERM solution $\hat f$ on the training data can be given as follows.
\begin{equation}
    \hat f = \argmin_{f \in \cF} \frac{1}{nN}\left(\sum_{\widehat D \in \cS_\trn}\sum_{(\x, y )\in D}\ell(f(\x), y)\right).
\end{equation}
We denote the RHS in the above equation for any $f \in \cF$ as the \textit{empirical risk} $\widehat L(f)$. Existing \textit{invariant} approaches~\cite{ben2010theory} build on exploiting the traditional decomposition of the empirical risk as a function of the variance of $f$ across $\fP$.
\begin{equation}
   \underset{\text{generalization gap}}{\left|L(f)-\widehat L(f)\right|} \preccurlyeq \underbrace{\mathcal O\left(\frac{\text{Var}_{D \in \cS_\trn}(f)}{N}\right)^{\frac{1}{2}}}_{\text{inter-domain variance}}.
\end{equation}
The particular approximation of the variance penalty (term 2 of the RHS) leads to the class of \textit{invariant} domain generalization algorithms. In contrast, in this paper we focus on controlling the LHS directly by selecting a stronger function class $\cF$ over the product space $\cD \times \cX$ (instead of $\cX$)\footnote{While we do not investigate this in detail, our approach is compatible with \textit{invariant} approaches, as we consider separate aspects of the problem.}. The key modeling choice we make is to learn a function $F :  \cD \times \cX \rightarrow \cY$ that predicts $\hat{y} = F(D_X, \x)$ for any $(\x, y) \in D$, where $D_X$ is the marginal of $\cX$ under $D$.

\subsection{Background}
Let $\mathcal X \subset \mathbb R^d$ be a \textit{compact} input space, $\mathcal Y \in [-1, 1]$ to be the output space, and let $\mathfrak P_{\cX \times \cY}$ denote the set of all probability distributions over the measurable space $(\cX \times \cY, \Sigma)$, where $\Sigma$ is the $\sigma$-algebra of subsets of $\cX \times \cY$. Additionally, we assume there exists sets of probability distributions $\fP_\cX$ and $\fP_{\cY | \cX}$ such that for any sample $P_{XY} \in \fP_{\cX\times\cY}$ there exist samples $P_X \in \fP_\cX$ and $P_{Y|X} \in \fP_{\cY|\cX}$ such that $P_{XY} = P_X \bullet P_{Y|X}$ (this characterization is applicable under suitable assumptions, see Sec. 3 of~\cite{blanchard2011generalizing}). We assume that there exists a measure $\mu$ over $\fP_{\cX \times \cY}$ and each \textit{domain} $\cD$ is an i.i.d. sample from $\cP_{\cX \times \cY}$, according to $\mu$. The \textit{training set} is generated as follows. We sample $N$ realizations $P_{XY}^{(1)}, ..., P_{XY}^{(n)}$ from $\fP_{\cX\times\cY}$ according to $\mu$. For each domain, we then are provided $\cD_i$, which is a set of $n_i$ i.i.d. samples $(\x_j^{(i)}, y_j^{(i)})_{j \in [n_i]}$ sampled i.i.d. from $P_{XY}^{(i)}$.

Each \textit{test} domain is sampled similarly to the training domain, where we first sample a probability distribution $P^T_{XY} \sim \mu$, and are then provided $n_T$ samples $(\x_j^{T}, y_j^{T})_{j \in [n_T]}$ from $P^T_{XY}$ that forms a test domain. The key modeling assumption we make is to learn a decision function $f :  \fP_\cX \times \cX \rightarrow \bbR$ that predicts $\hat{y} = f(\widehat P_X, \x)$ for any $(\x, y) \in \cD$ and $\widehat P_X$ is the associated empirical distribution of $\cD$. For any loss function $\ell : \bbR \times \cY \rightarrow \bbR_+$, then the empirical loss on any domain $\cD_i$  is $\frac{1}{n_i}\sum_{(\x, y) \in \cD_i} \ell(f(\widehat{P}^{(i)}_X, \x), y)$. We can define the average generalization error over $N$ test domains with $n$ samples as,
\begin{equation}
    \widehat{L}_N(f, n) := \frac{1}{N}\sum_{i \in [N]}\left[\frac{1}{n}\sum_{(\x, y) \in \cD_i} \ell(f(\widehat{P}^{(i)}_X, \x), y)\right].
\end{equation}
In the limit as the number of available domains $N \rightarrow \infty$, we obtain the expected $n$-sample generalization error as,
\begin{equation}
    L(f, n) := \bbE_{P_{XY} \sim \mu, \cD \sim (P_{XY})\otimes n}\left[\frac{1}{n}\sum_{i=1}^n \ell(f(\widehat{P}_X, \x_i), y_i)\right].
\end{equation}
The modeling assumption of $f$ being a function of the empirical distribution $\widehat{P}_X$ introduces the primary difference between the typical notion of \textit{training error}. As $n \rightarrow \infty$, we see that the empirical approximation $\widehat{P}_X \rightarrow P_X$. This provides us with a true generalization error, in the limit of $n\rightarrow \infty$ (i.e., we also have complete knowledge of $P_X$),
\begin{equation}
    L(f, \infty) := \bbE_{P_{XY} \sim \mu, (\x, y) \sim P_{XY}}\left[\ell(f(P_X, \x), y)\right].
\end{equation}
An approach to this problem was proposed in Blanchard~\etal\cite{blanchard2011generalizing} that utilizes product kernels. The authors consider a kernel $\kappa$ over the product space $\fP_\cX \times \cX$ with associated RKHS $\cH_\kappa$. They then select the function $f_\lambda$ such that 
\begin{align}
    f_\lambda  = \argmin_{f \in \cH_\kappa} \frac{1}{N}\sum_{i=1}^N \frac{1}{n_i} \sum_{j=1}^{n_i} \ell(f(\widehat{P}^{(i)}_X, \x_{ij}), y_{ij}) + \lambda \lVert f \rVert_{\cH_\kappa}^2.
\end{align}
The kernel $\kappa$ is specified as a Lipschitz kernel on $\fP_\cX \times \cX$:
\begin{align}
    \kappa((P_X, \x), (P_{X'}, \x')) = f_\kappa(k_P(P_X, P_{X'}, k_X(\x, \x')).
    \label{eqn:product_kernels}
\end{align}
Where $K_P$ and $K_X$ are kernels defined over $\fP_\cX$ and $\cX$ respectively, and $f_\kappa$ is Lipschitz in both arguments, with constants $L_P$ and $L_X$ with respect to the first and second argument respectively. Moreover, $K_P$ is defined with the use of yet another kernel $\fK$ and feature extractor $\phi$:
\begin{align}
    k_P(P_X, P_{X'}) = \fK(\phi(P_X), \phi(P_{X'})).
\end{align}
Here, $\fK$ is a necessarily non-linear kernel and $\phi$ is the kernel mean embedding of $P_X$ in the RKHS of a distinct kernel $k'_X$, i.e.,
\begin{align}
    P_X \rightarrow \phi(P_X) := \int_\cX k'_X(\x, \cdot) dP_X(\x).
\end{align}
\subsection{Kernel Assumptions}

To obtain bounds on the generalization error, the kernels $k_X, k'_X$ and $\fK$ have the assumptions of boundedness, i.e., $k_X(\cdot, \cdot) \leq B^2_k$, $k'_X(\cdot, \cdot) \leq B^2_{k'}$ and $\fK(\cdot, \cdot) \leq B^2_\fK$. Moreover, $\phi$ satisfies a H\"older condition of order $\alpha \in (0, 1]$ with constant $L_\fK$ in the RKHS ball of $k'_X$, i.e., $\forall \x, \x' \in \cB(B_{k'}), \lVert \phi(\x) - \phi(\x') \rVert \leq L_\fK \lVert \x - \x' \rVert^\alpha$.
\subsection{Consistency Bound (Theorem 1 of the Main Paper)}
We first state the formal theorem.
\begin{theorem}
Let $D$ be a distribution over $\cX$, and $\widehat{D}$ be the empirical distribution formed by $n$ samples from $\cX$ drawn according to $D$, and $\lVert\Phi_\cD(\x)\rVert \leq B_{k'}, \bbE_{\x \sim D}[\lVert \Phi_\cD(\x) \rVert_2^2] \leq \sigma^2$. Then, we have with probability at least $1-\delta$ over the draw of the samples,
\begin{align*}
    \left\lVert \bmu(D) - \bmu(\widehat D)\right\rVert_\infty \leq \sqrt{\frac{8\sigma^2\log(1/\delta)}{n}- \frac{1}{4}}.
\end{align*}
\label{lem:feature_variance}
\end{theorem}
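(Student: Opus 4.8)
The plan is to recognize that $\bmu(\widehat D) - \bmu(D)$ is exactly the deviation of an empirical average of the i.i.d.\ feature vectors $\Phi_\cD(\x_1),\dots,\Phi_\cD(\x_n)$ from its mean $\bmu(D) = \bbE_{\x\sim D}[\Phi_\cD(\x)]$, so the statement is a vector-valued concentration inequality. I would first pass from the $\infty$-norm to a single scalar deviation. The cleanest route that keeps the bound dimension-free (i.e.\ with only $\log(1/\delta)$ rather than $\log d_D$) is to bound $\lVert \bmu(D)-\bmu(\widehat D)\rVert_\infty \le \lVert \bmu(D)-\bmu(\widehat D)\rVert_2$ and concentrate the Hilbert-space norm directly; alternatively one controls each coordinate $\bbE_{\x}[\Phi_{\cD,j}(\x)] - \tfrac1n\sum_i \Phi_{\cD,j}(\x_i)$ separately. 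The two hypotheses then play distinct roles: the uniform bound $\lVert \Phi_\cD(\x)\rVert \le B_{k'}$ controls the range of each summand, while $\bbE_\x[\lVert\Phi_\cD(\x)\rVert_2^2]\le\sigma^2$ controls its second moment, and it is this variance control that is responsible for the $\sigma^2/n$ scale in the conclusion.

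The core step is a variance-sensitive tail bound. Since each summand is bounded and has second moment at most $\sigma^2$, I would apply a Bernstein/Bennett-type exponential-moment bound (or, equivalently, Hoeffding's lemma applied to the centered variable together with the variance estimate) to the scalar deviation. In the variance-dominated regime this yields a tail of the form $\bbP(\,\cdot > t) \le \exp(-cnt^2/\sigma^2)$, and inverting this relation in $t$ for a target failure probability $\delta$ produces precisely the $\sqrt{\sigma^2\log(1/\delta)/n}$ behaviour claimed. To pin down the expectation of the norm in the $\ell_2$ route, I would use the exact identity $\bbE\lVert \bmu(D)-\bmu(\widehat D)\rVert_2^2 = \tfrac1n\bigl(\bbE\lVert\Phi_\cD\rVert_2^2 - \lVert\bmu(D)\rVert_2^2\bigr) \le \sigma^2/n$, which feeds straight into a bounded-differences (McDiarmid) concentration with increment $2B_{k'}/n$.

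The main obstacle I anticipate is matching the exact constants in the stated bound — the factor $8$ and especially the additive $-\tfrac14$ inside the square root — rather than merely reproducing the correct $\widetilde{\cO}(\sigma/\sqrt n)$ rate. These constants must come from a careful choice of the exponential-moment parameter $\lambda$ and from simultaneously invoking both the boundedness and the variance bound, so the delicate bookkeeping is in the MGF optimization; the $-\tfrac14$ in particular suggests a completion-of-square step, and renders the bound informative only once $8\sigma^2\log(1/\delta)/n \ge \tfrac14$, i.e.\ in the genuinely high-probability regime. A secondary point requiring care is ensuring the argument is dimension-free, since a naive coordinatewise union bound over the $d_D$ entries of $\bmu$ would introduce an extraneous $\log d_D$ factor absent from the statement.
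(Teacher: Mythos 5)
Your plan matches the paper's proof: the paper centers the feature vectors $\Phi_\cD(\x_i)-\bmu(D)$ and invokes a vector-valued Bernstein inequality with tail $\exp\bigl(-n\epsilon^2/(8\sigma^2)-\tfrac14\bigr)$, then inverts in $\epsilon$ at failure probability $\delta$ (using that the Hilbert-space norm dominates the $\ell_\infty$ norm, so the bound is dimension-free). The constants $8$ and $-\tfrac14$ you were worried about are simply inherited from that cited Bernstein bound rather than re-derived, so your Bernstein-based route is essentially the paper's argument.
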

\begin{proof}
We first note that $\bmu(D) = \bbE_{\x \sim D}[\Phi(\x)]$, and each $\Phi(\x)$ is a $d_D-$dimensional random vector. Furthermore, since the kernel $k'_X$ is bounded, we have that $\lVert \Phi(\x) \rVert \leq B_{k'}^2$ for any $\x \in \cX$. Next, we present a vector-valued Bernstein inequality.
\begin{lemma}[Vector-valued Bernstein bound~\cite{}]
Let $\x_1, \x_2, ..., \x_n$ be $n$ finite-dimensional vectors such that $\bbE[\x_i] = 0$, $\lVert \x_i \rVert \leq \mu$ and $\bbE[\lVert \x_i \rVert^2] \leq \sigma^2$ for each $i \in [n]$. Then, for $0 < \epsilon < \sigma^2/\mu$, 
\begin{align*}
    \bbP\left(\left\lVert \frac{1}{n}\sum_{i=1}^n \x_i \right\rVert \geq \epsilon\right) \leq \exp\left(-n\cdot\frac{\epsilon^2}{8\sigma^2} - \frac{1}{4}\right).
\end{align*}
\label{lem:bernstein}
\end{lemma}
Now, we have, by the above result, for any $0 < \epsilon < \sigma^2/B_{k'}$
\begin{align}
    \bbP\left(\left\lVert \bmu(D) - \bmu(\widehat D)\right\rVert \geq \epsilon\right) \leq \exp\left(-n\cdot\frac{\epsilon^2}{8\sigma^2} - \frac{1}{4}\right).
\end{align}
This is obtained by noting that $\bmu(\widehat D) = \frac{1}{n}\sum_{i=1}^n \Phi(\x_i)$ and that $\bmu(D) = \bbE_{\x \sim D}[\Phi(\x)]$. Setting the RHS as $\delta$ and rearranging terms gives us the final result.
\end{proof}
\subsection{Generalization Bound (Theorem 2 of Main Paper)}
\begin{theorem}[Uniform Risk Bound]
\label{thm:risk_bound}
Let $(P^{(i)}_{XY})_{i \in [N]}$ be $N$ training domains sampled i.i.d. from $\mu$, and let $S_i = (\x_{ij}, y_{ij})_{j \in [n]}$ be $n$-sample training sets for each domain. For any loss function $\ell$ that is bounded by $B_\ell$ and Lipschitz with constant $L_\ell$, we have, with probability at least $1-\delta$ for any $R$:
\begin{equation*}
\underset{f \in \cB_\kappa(R)}{\sup} |L(f, \infty) - \hat{L}_N(f, n)| \leq \cO\left(R\left(\frac{\log (N/\delta)}{n}^{\frac{\alpha}{2}} +  \sqrt{\frac{\log(1/\delta)}{N}}\right)\right).
\end{equation*}
\end{theorem}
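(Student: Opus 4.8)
The plan is to interpose a quantity that uses the \emph{true} per-domain distributions together with the \emph{empirical} samples. Define
\[
\widetilde L_N(f,n) := \frac{1}{N}\sum_{i\in[N]}\frac{1}{n}\sum_{j\in[n]} \ell\bigl(f(P^{(i)}_X,\x_{ij}),y_{ij}\bigr),
\]
and split $|L(f,\infty)-\widehat L_N(f,n)| \le |L(f,\infty)-\widetilde L_N(f,n)| + |\widetilde L_N(f,n)-\widehat L_N(f,n)|$. The first term carries no distribution-approximation error and is controlled by a standard empirical-process argument over the $N$ i.i.d.\ domains, producing the $R\sqrt{\log(1/\delta)/N}$ contribution. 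The second term isolates the error from replacing each $P^{(i)}_X$ by $\widehat P^{(i)}_X$ inside $f$, and through Theorem~\ref{lem:feature_variance} applied uniformly over domains yields the $R(\log(N/\delta)/n)^{\alpha/2}$ contribution.

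For the distribution-approximation term I would first show that every $f\in\cB_\kappa(R)$ is H\"older-$\alpha$ in its distribution slot. Writing $z=(\widehat P^{(i)}_X,\x)$ and $z'=(P^{(i)}_X,\x)$, the reproducing property and Cauchy--Schwarz give $|f(z)-f(z')|\le R\,\lVert \kappa(z,\cdot)-\kappa(z',\cdot)\rVert_{\cH_\kappa}$, and the squared RKHS distance equals the second difference $\kappa(z,z)-2\kappa(z,z')+\kappa(z',z')$. Expanding $\kappa=f_\kappa(k_P,k_X)$ with the spatial argument fixed, and using $k_P=\fK(\phi(\cdot),\phi(\cdot))$ so that this second difference is (up to the bounded factor $k_X(\x,\x)\le B_k^2$) exactly $\lVert \phi_\fK(\phi(\widehat P^{(i)}_X))-\phi_\fK(\phi(P^{(i)}_X))\rVert^2$, the H\"older-$\alpha$ continuity of the $\fK$-feature map bounds it by $B_k^2 L_\fK^2\lVert\phi(\widehat P^{(i)}_X)-\phi(P^{(i)}_X)\rVert^{2\alpha}$. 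Taking the square root gives $|f(z)-f(z')|\lesssim R\,\lVert\phi(\widehat P^{(i)}_X)-\phi(P^{(i)}_X)\rVert^{\alpha}$. I would then invoke Theorem~\ref{lem:feature_variance} with a union bound over all $N$ domains, so that simultaneously $\lVert\phi(\widehat P^{(i)}_X)-\phi(P^{(i)}_X)\rVert \lesssim \sqrt{\sigma^2\log(N/\delta)/n}$; raising to the power $\alpha$ and using $L_\ell$-Lipschitzness of $\ell$ yields $|\widetilde L_N-\widehat L_N|\lesssim L_\ell R\,(\sigma^2\log(N/\delta)/n)^{\alpha/2}$.

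For the finite-domain term, observe that $g_i(f):=\tfrac1n\sum_j \ell(f(P^{(i)}_X,\x_{ij}),y_{ij})$ are i.i.d.\ across $i$, lie in $[0,B_\ell]$, and have mean $L(f,\infty)$. I would apply McDiarmid's inequality to $\sup_{f\in\cB_\kappa(R)}|L(f,\infty)-\tfrac1N\sum_i g_i(f)|$ (each $g_i$ perturbs the supremum by at most $B_\ell/N$), then symmetrize and use Talagrand's contraction inequality to strip the $L_\ell$-Lipschitz loss. The residual Rademacher complexity of $\cB_\kappa(R)$ is bounded via the reproducing property by $R\sqrt{\sum_i\kappa(z_i,z_i)}/N \le RB/\sqrt N$, where $B$ bounds $\kappa$ through the boundedness of $k_X$ and $\fK$ and the Lipschitz $f_\kappa$. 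This gives the $R\sqrt{\log(1/\delta)/N}$ term, and a final union bound over the two high-probability events completes the argument.

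The main obstacle is the exponent bookkeeping in the distribution-approximation term. A na\"ive use of the Lipschitz constant $L_P$ of $f_\kappa$ on the second difference only yields $\lVert\phi(\widehat P_X)-\phi(P_X)\rVert^{\alpha}$, which after the square root degrades the rate to $\alpha/4$; recovering the stated $\alpha/2$ requires recognizing that the second difference is genuinely a squared feature-norm $\lVert\phi_\fK(\cdot)-\phi_\fK(\cdot)\rVert^2$ (as for a product kernel), so that the H\"older exponent $\alpha$ is applied to a quantity that is subsequently square-rooted only once. Getting this structural step right, together with the union bound that converts the per-domain guarantee of Theorem~\ref{lem:feature_variance} into the $\log(N/\delta)$ factor, is the crux of the proof.
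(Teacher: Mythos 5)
Your proposal is correct and follows essentially the same route as the paper: your intermediate quantity $\widetilde L_N(f,n)$ is exactly the paper's $\hat L_N(f,\infty)$, the distribution-approximation term is controlled the same way (reproducing property, H\"older continuity of the $\fK$-feature map, and Theorem~\ref{lem:feature_variance} with a union bound over the $N$ domains), and the finite-domain term is the standard empirical-process bound the paper imports from Blanchard et al. If anything, your exponent bookkeeping on the second-difference step is more careful than the paper's own chain of inequalities, which glosses over the square-root issue you flag.
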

\begin{proof}
We begin by decomposing the LHS.
\begin{align}
    \underset{f \in \cB_\kappa(R)}{\sup} |L(f, \infty) - \hat{L}_N(f, n)|  &\leq \underbrace{\underset{f \in \cB_\kappa(R)}{\sup} |L(f, \infty) - \hat{L}_N(f, \infty)|}_{A} + \underbrace{\underset{f \in \cB_\kappa(R)}{\sup} |\hat{L}_N(f, \infty) - \hat{L}_N(f, n)|}_{B}.
\end{align}
We first control $B$. Note that the loss function is Lipschitz in terms of $f$, and therefore we have,
\begin{align}
    \underset{f \in \cB_\kappa(R)}{\sup} |\hat{L}_N(f, \infty) - \hat{L}_N(f, n)| & \leq \frac{L_\ell}{nN} \cdot \underset{f \in \cB_\kappa(R)}{\sup} \left| \sum_{i=1}^N\sum_{j=1}^n{f(\x_{ij}, \bmu(D_i)) - f(\x_{ij}, \bmu(\widehat D_i)) }\right| \\
    & \leq \frac{L_\ell}{nN}  \sum_{i=1}^N\sum_{j=1}^n\underset{f \in \cB_\kappa(R)}{\sup} \left|f(\x_{ij}, \bmu(D_i)) - f(\x_{ij}, \bmu(\widehat D_i)) \right| \\
    & \leq \frac{L_\ell}{N}  \sum_{i=1}^N\underset{f \in \cB_\kappa(R)}{\sup} \left|f(\cdot, \bmu(D_i)) - f(\cdot, \bmu(\widehat D_i)) \right|.
\end{align}
We will now bound $\underset{f \in \cB_\kappa(R)}{\sup} \left|f(\cdot, \bmu(D_i)) - f(\cdot, \bmu(\widehat D_i)) \right|$. Note that by the reproducing property,
\begin{align}
    \underset{f \in \cB_\kappa(R)}{\sup} \left|f(\x, \bmu(D_i)) - f(\x, \bmu(\widehat D_i)) \right| &\leq \lVert f \rVert_{\kappa} \sup \left|f_\kappa(k_P(\bmu(D), \cdot), k_X(\x, \cdot))- f_\kappa(k_P(\bmu(\widehat D), \cdot), k_X(\x, \cdot))\right| \\
    &\leq R \cdot\sup \left|f_\kappa(k_P(\bmu(D), \cdot), k_X(\x, \cdot))- f_\kappa(k_P(\bmu(\widehat D), \cdot), k_X(\x, \cdot))\right| \tag{Since $\lVert f\rVert_\kappa \leq R$}\\
    &\leq RL_P \cdot\sup \left| \fK(\bmu(D), \cdot) - \fK(\bmu(\widehat D), \cdot) \right| \tag{Since $f_\kappa$ is Lipschitz} \\
    &\leq RL_P \cdot\sup \left\lVert \Phi_\fK(\bmu(D)) - \Phi_\fK(\bmu(\widehat D))\right\rVert \tag{Triangle inequality}\\
    &\leq RL_P \cdot\sup \left\lVert \bmu(D) - \bmu(\widehat D)\right\rVert_\infty^\alpha \tag{$\alpha$-H\"older assumption}
\end{align}
By Theorem 1, we can bound this term as well. Putting the results together and taking a union bound over all $N$ domains, we have with probability at least $1-\delta$,
\begin{align}
    \underset{f \in \cB_\kappa(R)}{\sup} |\hat{L}_N(f, \infty) - \hat{L}_N(f, n)| & \leq \frac{L_\ell L_P}{N} \left(R\frac{8\sigma^2\log(N/\delta)}{n}- \frac{1}{4}\right)^{\alpha/2}
\end{align}
Control of the term $B$ is done identically as Section 3.2 of the supplementary material in Blanchard~\etal\cite{blanchard2011generalizing}, with one key difference: in the control of term (IIa) (in their notation), we use the Lipschitz kernel instead of the product kernel, giving a constant $\sqrt{L_P}$ (instead of $B_\kappa$) in the bound of IIa. Combining the two steps gives the final stated result.
\end{proof}
Our proof admits identical dependences as Blanchard~\etal\cite{blanchard2011generalizing}, but the analysis differs in two key aspects: first, we use a Bernstein concentration to obtain a result in terms of the variance (Theorem 1 and term A of Theorem 2), which can potentially be tighter if the variance is low (see main paper and consistency). Next, we extend their result from product kernels to a general form of Lipschitz kernels, more suitable for deep-learning systems.
\section{Two-Step Training Procedure}
\begin{figure*}[htb]
\centering
\includegraphics[width=\linewidth]{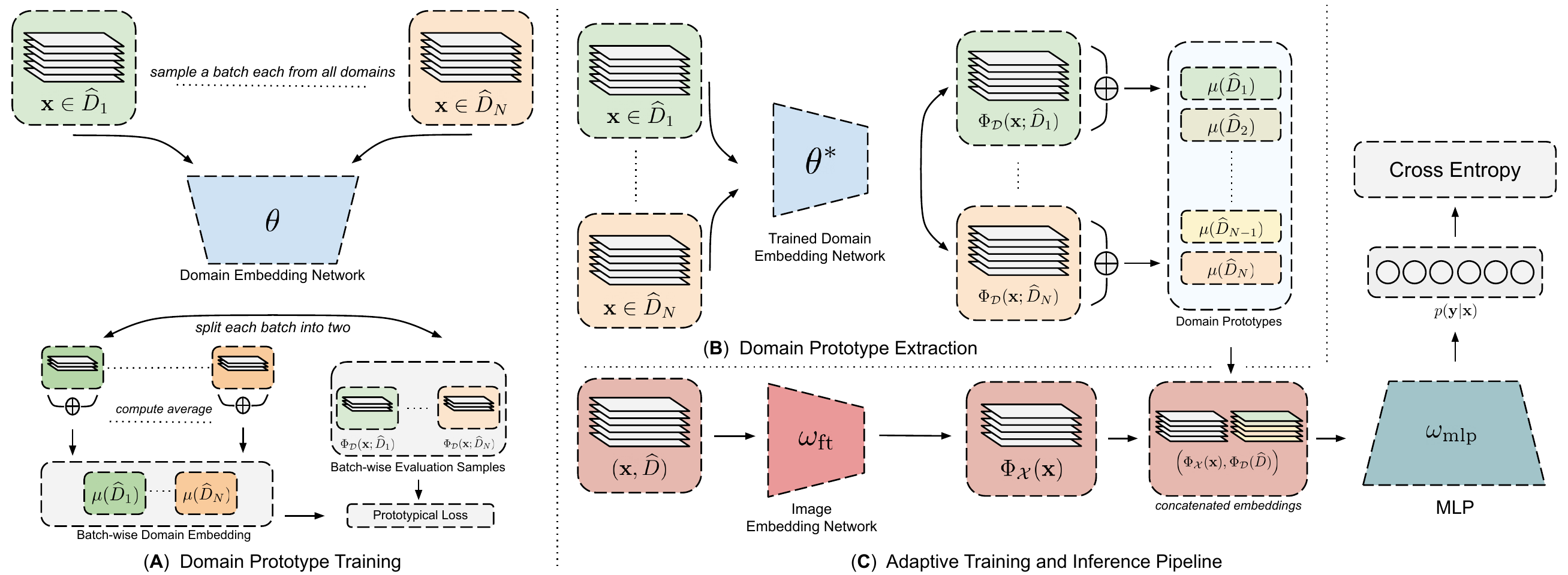}
\caption{Illustrative figure for the two-step training process.}\label{fig:net_architecture}
\end{figure*}

In our approach, we directly train a \textit{domain-aware} neural network $g: \bbR^d \times \bbR^{d_D} \rightarrow [K]$. For any input image $\x$ from domain $\cD$, $g$ takes in the \textit{augmented} input $(\x, \phi(\cD; \btheta^*))$, which is composed of the input $\x$ and the corresponding domain prototype $\phi(\cD; \btheta^*)$ obtained from the previous section, and predicts the class label $\hat y$ as output. The neural network architecture is described in Figure~\ref{fig:net_architecture}.

$g$ is a composition of an image feature extractor ($\Phi_X$) whose output is concatenated with the domain prototype $\Phi(\cD)$ and fed into a series of non-linear layers ($\fK$) to produce the final output. The domain-aware neural network parameters are denoted by $\bomega$. $f$ therefore is parameterized by both $\bomega$ and $\btheta$ and is described as $f(\x, \cD; \bomega, \btheta) = g(\x, \phi(\cD; \btheta); \bomega)$.

\begin{remark}
It is possible to decouple the prototype construction and inference procedures in the test phase (since we can use unsupervised samples from a domain obtained \textit{a priori} to construct prototypes). In this setting, we can use a distinct set of $n_p$ points to construct the domain prototype for each test domain. We can see directly from Theorem~\ref{lem:feature_variance} that for any Lipschitz loss function, the maximum disparity between classification using the optimal prototype (i.e., formed with knowledge of $\cD$) and the $n_p$-sample approximation is (with high probability) at most $\widetilde{\cO}\left(n_p^{-\alpha} + (\sigma_P/n_p)^{-\alpha/2}\right)$, which is small when $\sigma_P$ is small, i.e., the prototype is consistent.
\end{remark}

\section{Hyperparameters}

\subsection{Small-Scale Hyperparameters}
We follow the setup for small-scale datasets that is identical to Gulrajani and Lopez-Paz~\cite{gulrajani2020search} and use their default values, and the search distribution for each hyperparameter via random search. These values are summarized in Table \ref{table:hyperparameters}.

\begin{table}[H]
    \begin{center}
    { 
    \begin{tabular}{llll}
        \toprule
        \textbf{Condition} & \textbf{Parameter} & \textbf{Default value} & \textbf{Random distribution}\\
        \midrule
        \multirow{3}{*}{Basic hyperparameters}       & learning rate & 0.00005 & $10^{\text{Uniform}(-5, -3.5)}$\\
                                      & batch size    & 32   & $2^{\text{Uniform}(3, 5.5)}$\\
                                      & weight decay & 0    & $10^{\text{Uniform}(-6, -2)}$\\
        \midrule
        \multirow{8}{1.5cm}{C-DANN} & lambda                 & 1.0    & $10^{\text{Uniform}(-2, 2)}$\\
                                      & generator learning rate & 0.00005 & $10^{\text{Uniform}(-5, -3.5)}$\\
                                      & generator weight decay & 0    & $10^{\text{Uniform}(-6, -2)}$\\
                                      & discriminator learning rate & 0.00005 & $10^{\text{Uniform}(-5, -3.5)}$\\
         & discriminator weight decay & 0    & $10^{\text{Uniform}(-6, -2)}$\\
         & discriminator steps        & 1    & $2^{\text{Uniform}(0, 3)}$\\
         & gradient penalty           & 0    & $10^{\text{Uniform}(-2, 1)}$\\
         & adam $\beta_1$             & 0.5    & $\text{RandomChoice}([0, 0.5])$\\
        \midrule
        \multirow{2}{*}{IRM}          & lambda  & 100    & $10^{\text{Uniform}(-1, 5)}$\\
                                      & iterations of penalty annealing & 500 & $10^{\text{Uniform}(0, 4)}$\\
        \midrule
        Mixup                         & alpha & 0.2 & $10^{\text{Uniform}(0, 4)}$\\
        \midrule
        DRO                           & eta   & 0.01 & $10^{\text{Uniform}(-1, 1)}$\\
        \midrule
        MMD                           & gamma & 1 & $10^{\text{Uniform}(-1, 1)}$\\
        \midrule
        MLDG           & beta & 1 & $10^{\text{Uniform}(-1, 1)}$\\
        \midrule
        all          & dropout & 0    & $\text{RandomChoice}([0, 0.1, 0.5])$\\
        \bottomrule
    \end{tabular}
    }
    \end{center}
    \caption{Hyperparameters for small-scale experiments.} 
    \label{table:hyperparameters}
\end{table}
\subsubsection{Prototypical Network}
In addition to these, the prototypical network optimal hyperparameters are as follows: We use a ResNet-50 architecture initialized with ILSVRC12~\cite{deng2009imagenet} weights, available in the PyTorch Model Zoo. We run 1000 iterations of prototypical training with $N_t = 4$ domains sampled each batch, and a batch size of $32$ per domain. The learning rate is $1e-6$ and weight decay is $1e-5$ with $50\%$ of the points in each batch used for classification and the rest for creating the prototype.
\subsubsection{DA-CORAL and DA-MMD}
The DA-CORAL and DA-MMD architectures are identical to DA-ERM. We additionally add the MMD and CORAL regularizers with $\gamma = 1$ for each in the final penultimate layer of the network (after the bottleneck, before the logits).
\subsection{Large-Scale Hyperparameters}
\subsubsection{LT-ImageNet}
\textbf{DA-ERM Prototype Training}. We did a small grid-search for learning rate (in the set $1e-1, 1e-2, 1e-3, 1e-4, 1e-5$) and the final parameters are: learning rate of $0.001$ and weight decay of $1e-5$. We train with a batch size of 100 (per domain), over 8 GPUs and run the prototypical algorithm for 1300 iterations. We use $50\%$ of the points for the support set per batch per domain and the remaining as test points. 

\textbf{Main Training for all methods}. For all methods (MMD, CORAL, ERM, DA-ERM) we perform main training for 5 epochs of the training data with a batch-size of 100 (per domain), learning rate of $0.005$ and weight-decay of $1e-5$ over a system of 8 GPUs. We additionally tune the value of the loss weight in the range $(0, 5)$ for MMD and CORAL. The reported results are with the loss weight ($\gamma = 1$).

\subsubsection{Geo-YFCC}
\textbf{DA-ERM Prototype Training}. We did a small grid-search for learning rate (in the set $1e-2, 1e-3, 1e-4, 1e-5$), weight-decay (in the set $1e-4, 1e-5, 1e-6$), and the final parameters are: learning rate of $2e-2$ and weight decay of $1e-5$. We train with a batch size of 80 (per domain), over 48 GPUs and run the prototypical algorithm for 1300 iterations. We use $80\%$ of the points for the support set per batch per domain and the remaining as test points. 

\textbf{Main Training for all methods}. We perform a grid search on number of epochs of training data in the range $(3, 6, 12, 25)$ and found that all methods performed best (overfit the least to training domains) at 6 epochs. For all methods (MMD, CORAL, ERM, DA-ERM) we perform main training with a batch-size of 80 (per domain), learning rate of $0.04$ and weight-decay of $1e-5$ over a system of 64 GPUs. We additionally tune the value of the loss weight in the range $(0, 1.5)$ for MMD and CORAL. The reported results are with the loss weight ($\gamma = 1$).

\section{Consistency Experiment}

\begin{table}[th!]
    \rowcolors{3}{blue!5}{white}
    \centering
    \begin{tabular}{ccccccc}
    \hline
    \hline\multirow{2}{*}{Dataset} & \multicolumn{5}{c}{Accuracy on Validation Set, top1/top5} \\
                  & 50 & 100   & 250   & 500   & 1000   & 2000 \\
    \hline
    DG - ImageNet & -- & 56.0/80.1  & 56.1/80.2  & 56.2/80.2  & --   & --   \\
    Geo - YFCC    & 23.7/49.0 & 23.3/49.1  & 23.4/49.3  & 23.6/49.3  & 23.4/49.1   & 23.4/49.2   \\
    \hline
    \hline
    \end{tabular}
    \caption{Number of points used to construct prototype.}
    \label{tab:exp_abl_num_proto_points}
\end{table}
Table~\ref{tab:exp_abl_num_proto_points} shows the effect of varying the number of data points used to consruct the domain prototypes for LT-ImageNet and Geo-YFCC datasets. We observe that performance remains similar till $50$ points. This is desirable as in many settings we do not have access to many samples from new domains.
{\small
\bibliographystyle{ieee_fullname}
\bibliography{egbib}
}

\end{document}